\newtheorem{theorem}{Theorem}[section]
\newtheorem{lemma}[theorem]{Lemma}
\newtheorem{assumption}{Assumption}
\newtheorem{corollary}[theorem]{Corollary}
\newtheorem{remark}{Remark}
\definecolor{Red}{rgb}{1,0,0}
\definecolor{Blue}{rgb}{0,0,1}
\definecolor{Olive}{rgb}{0.41,0.55,0.13}
\definecolor{Green}{rgb}{0,1,0}
\definecolor{MGreen}{rgb}{0,0.8,0}
\definecolor{DGreen}{rgb}{0,0.55,0}
\definecolor{Yellow}{rgb}{1,1,0}
\definecolor{Cyan}{rgb}{0,1,1}
\definecolor{Magenta}{rgb}{1,0,1}
\definecolor{Orange}{rgb}{1,.5,0}
\definecolor{Violet}{rgb}{.5,0,.5}
\definecolor{Purple}{rgb}{.75,0,.25}
\definecolor{Brown}{rgb}{.75,.5,.25}
\definecolor{Grey}{rgb}{.5,.5,.5}
\definecolor{Pink}{rgb}{1,0,1}
\definecolor{DBrown}{rgb}{.5,.34,.16}
\definecolor{Black}{rgb}{0,0,0}
\def\ind{\mathbbm{1}}
\def\proj{\mathcal{P}}
\def\tol{{\epsilon}}
\def\step{{\eta}}
\def\dphi{{\Phi^{(1)}}}
\def\ddphi{{\Phi^{(2)}}}
\def\X{\bold{X}}
\def\Q{\bold{Q}}
\def\W{\bold{W}}
\def\X{\bold{X}}
\def\H{\bold{H}}
\def\St{S_t}
\def\I{\bold{I}}
\def\hth{\hat{\theta}}
\def\th{{\theta}}
\def\tth{{\tilde{\th}}}
\def\e{{\epsilon}}
\def\diam{{{\rm diam}(\C)}}
\def\T{{\mathcal T}}
\def\C{{\mathcal C}}
\def\D{{\mathcal D}}
\def\reals{{\mathbb R}}
\def\<{\langle}
\def\>{\rangle}
\def\E{{\mathbb E}}
\def\P{{\mathbb P}}
\def\grad{{\nabla}}
\def\gradth{{\nabla_\th}}
\def\gradS{\boldsymbol{\nabla}^2}
\def\gradSth{\boldsymbol{\nabla}_\th^2}
\def\lmax{\lambda_{\text{max}}}
\def\lmin{\lambda_{\text{min}}}
\def\O{{\mathcal O}}
\def\argmin{{\rm argmin}}
\def\ALG{NewSamp }
\newcommand{\eq}[1]{\begin{alignat}{3}#1\end{alignat}}
\newcommand{\eqn}[1]{\begin{alignat*}{3}#1\end{alignat*}}
\newcommand{\commentout}[1]{}
\begin{document}

\title{Convergence rates of sub-sampled Newton methods
}

\author{
Murat A. Erdogdu\thanks{Department of Statistics,
Stanford University}\ \ \ \ \ \ \ \ \ \ \ 
Andrea Montanari${}^{*,}$\thanks{Department of Electrical Engineering, 
Stanford University}
}

\date{}

\maketitle

\begin{abstract}
We consider the problem of minimizing a sum of $n$ functions 
via projected iterations onto a convex parameter set $\C \subset \reals^p$ where $n\gg p\gg 1$. 
In this regime, algorithms which utilize sub-sampling techniques are known to be effective.
In this paper, we use sub-sampling techniques together with 
eigenvalue thresholding to design a new randomized batch algorithm 
which possesses comparable convergence rate to Newton's method, 
yet has much smaller per-iteration cost. 
The proposed algorithm is robust in terms of starting point and 
step size, and enjoys a composite convergence rate, namely, 
quadratic convergence at start and linear convergence 
when the iterate is close to the minimizer. 
We develop its theoretical analysis which also allows us to 
select near-optimal algorithm parameters. 
Our theoretical results can be used to obtain 
convergence rates of previously proposed sub-sampling based algorithms as well. 
We demonstrate how our results apply to well-known machine learning problems.
Lastly, we evaluate the performance of our algorithm on several datasets 
under various scenarios. 
\end{abstract}

\section{Introduction}
We consider the problem of minimizing  an average of $n$ functions $f_i : \reals^p \to \reals$,
\eq{
\underset{\th}{\text{minimize}} \ f(\th) \coloneqq \frac{1}{n}\sum_{i=1}^n f_i (\th)\, ,
}
in a batch setting, where $n$ is assumed to be much larger than $p$.
Most machine learning models can be expressed as above, 
where each function $f_i$ corresponds to an observation. 
Examples include logistic regression, support vector machines, 
neural networks and graphical models. 

Many optimization algorithms have been developed
to solve the above minimization problem using iterative methods
\cite{bishop1995neural,Boyd:2004,nesterov2004introductory}. 
In this paper, we consider the iterations of the following form 
\eq{\label{eq::update}
\th^{t+1} = \th^t -\step_t \Q^t \gradth f(\th^t),
}
where $\step_t$ is the step size and $\Q^t$ is a suitable scaling matrix 
that provides curvature information (For simplicity, we drop the projection throughout the introduction, i.e., we assume $\C=\reals^p$).

Updates of the form Eq.~(\ref{eq::update}) have been 
extensively studied in the optimization literature. 
The case where $\Q^t$ is equal  to the identity matrix corresponds to
\emph{Gradient Descent} (GD) which,
under smoothness assumptions, achieves linear convergence 
rate
with $\O(np)$ per-iteration cost. 
More precisely, GD with ideal step size yields
$$
\|\hth^{t+1}-\theta_*\|_2\le \xi^t_{1,\mbox{\tiny GD}}
\|\hth^{t}-\theta_*\|_2\, ,
$$
where, as $\lim_{t\to\infty}\xi^t_{1,\mbox{\tiny GD}} =
1-(\lambda_p^*/\lambda_1^*)$, and $\lambda^*_i$ is the $i$-th 
largest eigenvalue of the Hessian of $f(\theta)$ at
minimizer $\theta_*$.

Second order methods such as \emph{Newton's Method} 
(NM) and \emph{Natural Gradient Descent} (NGD) \cite{amari1998natural} 
can be recovered by taking $\Q^t$ to be the inverse Hessian and
the Fisher information evaluated at the current iterate, respectively.
Such methods may achieve quadratic convergence 
rates with $\O(np^2+p^3)$ per-iteration cost
\cite{bishop1995neural,nesterov2004introductory}.
In particular, for $t$ large enough, Newton's Method yields  $$\|\hth^{t+1}-\theta_*\|_2\le \xi^t_{2,\text{NM}}
\|\hth^{t}-\theta_*\|_{2}^2,$$ and it is insensitive to the condition
number of the Hessian.
However, when the 
number of samples grows large, computation of $\Q^t$
becomes extremely expensive.

A popular line of research tries to 
construct the matrix $\Q^t$ in a way that the update is computationally 
feasible, yet still provides sufficient second order information.
Such attempts resulted in Quasi-Newton methods, in which only
gradients and iterates are used in the construction of matrix $\Q^t$,
resulting in an efficient update at each step $t$.
A celebrated Quasi-Newton method is the 
\emph{Broyden-Fletcher-Goldfarb-Shanno} (BFGS) algorithm
\cite{broyden1970convergence,fletcher1970new,goldfarb1970family,shanno1970conditioning}
which requires $\O(np+p^2)$ per-iteration cost
\cite{bishop1995neural,nesterov2004introductory}.

An alternative approach  is
to use \emph{sub-sampling} techniques, where scaling matrix $\Q^t$ is 
based on randomly selected set of data points \cite{martens2010deep,byrd2011use,VinyalsAISTATS12}. 
Sub-sampling is widely used in the first order methods, but is not as well
studied for approximating the scaling matrix. In particular, 
 theoretical guarantees are still missing.
 
A key challenge is that the sub-sampled Hessian  is 
close to the actual Hessian along the directions corresponding to large
eigenvalues (large curvature directions in $f(\theta)$), but is a poor
approximation in the directions corresponding to small eigenvalues
(flatter directions in $f(\theta)$). In order to overcome this
problem, we use low-rank approximation. More precisely, we treat all the
eigenvalues below the $r$-th as if they were equal to the $(r+1)$-th.
This yields the desired stability with respect to the sub-sample:
we call our algorithm \ALG\!\!. In
this paper, we establish the following:

\begin{enumerate}
\item \ALG has a composite convergence rate: quadratic at start
  and linear near the minimizer, as illustrated in
  Figure \ref{fig::convAndCoeff}.
Formally, we prove a bound of the form $$\|\hth^{t+1}-\theta_*\|_2\le
\xi_1^t\|\hth^{t}-\theta_*\|_2+ \xi_2^t\|\hth^{t}-\theta_*\|^2_2$$ with
coefficient that are explicitly given (and are computable from data).
\item The asymptiotic behavior of the linear convergence coefficient
  is $\lim_{t\to\infty} \xi_1^t =1 -
  (\lambda^*_{p}/\lambda^*_{r+1})+\delta$, for $\delta$ small. The condition number 
  $(\lambda^*_{1}/\lambda^*_{p})$ which controls the convergence of
  GD, has been replaced by the milder
  $(\lambda^*_{r+1}/\lambda^*_{p})$.
For datasets with strong spectral features, this can be a large
improvement, as shown in Figure~\ref{fig::convAndCoeff}.
\item The above results are achived without tuning the step-size, in
particular, by setting $\step_t=1$.
\item The complexity per iteration of \ALG is $\O(np+|S|p^2)$
with $|S|$ the sample size.
\item Our theoretical results can be used to obtain 
convergence rates of previously proposed sub-sampling
algorithms. 

\end{enumerate}
We demonstrate the performance of \ALG on four datasets, 
and compare it to the well-known optimization
methods.

The rest of the paper is organized as follows: 
Section~\ref{sec:relatedWork} surveys the related work. 
In Section~\ref{sec:algorithm}, we describe the proposed algorithm and 
provide the intuition behind it. 
Next, we present our theoretical results in Section~\ref{sec::theory}, i.e., 
convergence rates corresponding to different sub-sampling schemes,
followed by a discussion on how to choose the algorithm parameters.
Two applications of the algorithm are discussed in Section~\ref{sec::examples}. 
We compare our algorithm with several existing methods on 
various datasets in Section~\ref{sec::experiments}. 
Finally, in Section~\ref{sec::discussion}, we conclude with a brief discussion.

\subsection{Related Work}\label{sec:relatedWork}
Even a synthetic review of optimization algorithms for large-scale
machine learning would go beyond the page limits of this paper. 
Here, we emphasize that 
the method of choice depends crucially on
the amount of data to be used, and their dimensionality (i.e.,
respectively, on the parameters $n$ and $p$).
In this paper, we focus on a regime in which $p$ is large but not so large as to
make matrix manipulations (of order $p^2$ to $p^3$) impossible. 
Also $n$ is large but not so large as to make batch gradient
computation (of order $np$) prohibitive. On the other hand, our aim is
to avoid $\O(np^2)$ calculations required by standard Newton
method. Examples of this regime are given in Section \ref{sec::examples}.

In contrast, online algorithms are the option of choice for very large
$n$ since the computation per update is independent of $n$. 
 In the case of \emph{Stochastic Gradient Descent} (SGD), 
the descent direction is formed by a randomly selected gradient 
\cite{robbins1951stochastic}. Improvements to SGD have been developed
by incorporating the previous gradient directions in the current update 
\cite{schmidt2013minimizing,senior2013empirical,bottou2010large,Duchi11}.

Batch algorithms, on the other hand, can achieve faster
convergence and exploit second order information. They are competitive
for intermediate $n$. Several methods in this category aim at
quadratic, or at least super-linear convergence rates. In particular, 
Quasi-Newton methods have  proven  effective 
\cite{bishop1995neural,nesterov2004introductory}. 
Another approach towards the same goal is to utilize sub-sampling to 
form an approximate Hessian \cite{martens2010deep,byrd2011use,VinyalsAISTATS12,qu2015sdna,erdogdu2015convergence,erdogdu2015newton-stein}. 
If the sub-sampled Hessian is close to the true Hessian, these methods 
can approach NM in terms of convergence rate, nevertheless, they enjoy 
much smaller complexity per update.
No convergence rate analysis is available for these methods: this
analysis  is
the main contribution
of our paper.  To the best of our knowledge, the best result in this
direction is
proven in \cite{byrd2011use} that estabilishes asymptotic convergence
without quantitative bounds (exploiting general theory from \cite{griva2009linear}).

Further improvements have been suggested either by utilizing \emph{Conjugate Gradient} (CG) methods
 and/or using Krylov sub-spaces \cite{martens2010deep,byrd2011use,VinyalsAISTATS12}. 
Sub-sampling can be also used to obtain an approximate solution,
if an exact solution is not required \cite{Dhillon-etal2013}.
Lastly, there are various hybrid algorithms that combine two or more 
techniques to gain improvement. 
Examples include, sub-sampling and Quasi-Newton \cite{schraudolph2007stochastic,sohl2013adaptive,byrd2014stochastic}, 
SGD and GD \cite{friedlander2012hybrid}, 
NGD and NM \cite{roux2010fast}, NGD and low-rank approximation \cite{roux2008topmoumoute}. 


\setlength{\textfloatsep}{10pt}
 \begin{algorithm}[t]
\caption{\ALG \label{alg::newsamp}}
\begin{algorithmic}
    \STATE {\bfseries Input:} $\hth^0, r, \tol, \{\step_t, |S_t|\}_{t},t=0$.
    	\begin{enumerate}
        \STATE {\bfseries Define:} $\proj_\C(\th)= \argmin_{\th' \in \C} \| \th - \th' \|_2$ is the Euclidean projection onto $\C$,\\
        
        \ \ \ \ \ \ \ \ \ \ \ \   $ [\mathbf{U}_k,\mathbf{\Lambda}_k] = \text{TruncatedSVD}_k(\H)$ is the rank-$k$ truncated SVD of $\H$ with $(\mathbf{\Lambda}_k)_{ii}=\lambda_i$.\\
        \WHILE{$\|\hth^{t+1}- \hth^{t}\|_2 \leq \tol$}
               \STATE Sub-sample a set of indices $\St\subset [n]$.
               \STATE Let $\H_{\St} = \frac{1}{|S_t|}\sum_{i\in \St}\gradSth f_i(\hth^t)$, \ \ \ and \ \ \, $ [\mathbf{U}_{r+1},\mathbf{\Lambda}_{r+1}] = \text{TruncatedSVD}_{r+1}(\H_{\St} )$,
               \STATE $\Q^t =  \lambda_{r+1}^{-1}\I_p +  \mathbf{U}_{r}\left(\mathbf{\Lambda}_r^{-1} -\lambda_{r+1}^{-1}\I_r\right) \mathbf{U}_{r}^T$ ,
        \STATE $\hth^{t+1} = \proj_{\C} \left(\hth^{t} - \step_t \Q^t \grad_\th f(\hth^{t})\right)$,
	\STATE $t \leftarrow t+1$.
         \ENDWHILE
         \end{enumerate}
         \STATE {\bfseries Output:} $\hth^{t}$.
\end{algorithmic}

\end{algorithm}
%
%

\section{\ALG\!: A Newton method via sub-sampling and eigenvalue thresholding} \label{sec:algorithm}

In the regime we consider, $n \gg p \gg 1$, there are 
two main drawbacks associated with the classical second order methods such as Newton's method. 
The predominant issue in this regime is the computation of the Hessian matrix, 
which requires $\O(np^2)$ operations, 
and the other
issue is finding the inverse of the Hessian, 
which requires $\O(p^3)$ 
computation. 
Sub-sampling is an effective and efficient way of addressing the first issue, 
by forming an approximate Hessian to exploit curvature information.
Recent empirical studies show that sub-sampling the
Hessian provides significant improvement in terms of computational cost, yet 
preserves the fast convergence rate of second order methods \cite{martens2010deep,VinyalsAISTATS12,erdogdu2015newton-stein-long}. 
If a uniform sub-sample is used, 
the sub-sampled Hessian will be a random matrix with expected value at the true Hessian,
which can be considered as a sample estimator to the mean.
Recent advances in statistics have shown that the performance of various estimators can be
significantly improved by simple procedures such as \emph{shrinkage}  and/or \emph{thresholding}
\cite{cai2010singular,donoho2013optimal,gavish2014optimal,gavish2014optimal}. 
To this extent, we use a specialized low-rank approximation as the important 
second order information is generally contained in the largest few 
eigenvalues/vectors of the Hessian.
We will see in Section \ref{sec::theory}, 
how this procedure provides faster convergence rates 
compared to the bare sub-sampling methods.
 
\ALG is presented as Algorithm \ref{alg::newsamp}. 
At iteration step $t$, 
the sub-sampled set of indices, its size and the corresponding sub-sampled 
Hessian is denoted by $S_t$, $|S_t|$ and $\H_{\St}$, respectively.
Assuming that the functions $f_i$'s are convex, eigenvalues of the symmetric matrix $\H_{\St}$ are non-negative. Therefore, singular value (SVD) and eigenvalue decompositions coincide.
The operation $ \text{TruncatedSVD}_{k}(\H_{\St} ) =  [\mathbf{U}_{k},\mathbf{\Lambda}_{k}] $ is the best rank-$k$ approximation, i.e., 
takes $\H_{\St}$ as input and returns the largest $k$ eigenvalues in the diagonal matrix $\mathbf{\Lambda}_{k} \in \reals^{k \times k}$ with the corresponding $k$ eigenvectors $\mathbf{U}_{k} \in \reals^{p \times k}$. 
This procedure requires $\O(kp^2)$ computation using a standard method, though there are faster randomized algorithms which provide accurate approximations to the truncated SVD problem with much less computational cost \cite{halko2011finding}. 
To construct the curvature matrix $[\Q^t]^{-1}$, instead of using the basic rank-$r$ approximation, 
we fill its 0 eigenvalues with the $(r+1)$-th eigenvalue of the sub-sampled Hessian which is
the largest eigenvalue below the threshold. If we compute a truncated SVD with $k=r+1$ and $(\mathbf{\Lambda}_{k})_{ii} = \lambda_i$, the described operation can be formulated as the following,
\eq{\label{eq::matrixQinv}
\Q^t =  \lambda_{r+1}^{-1}\I_p +  \mathbf{U}_{r}\left(\mathbf{\Lambda}_r^{-1} -\lambda_{r+1}^{-1}\I_r\right) \mathbf{U}_{r}^T,
}
which is simply the sum of a scaled identity matrix and a rank-$r$ matrix. 
Note that the low-rank approximation that is
suggested to improve the curvature estimation has been further utilized to reduce the cost
of computing the inverse matrix.
%
Final per-iteration cost of \ALG will be 
$\O\left(np+(|S_t|+r)p^2\right)\approx\O\left(np+|S_t|p^2\right)$.
\ALG takes the parameters $\{\step_t,|S_t|\}_t$ and $r$ as inputs. 
We discuss in Section \ref{sec::parameters}, how to choose these parameters near-optimally,
based on the theory we develop in Section \ref{sec::theory}. 

Operator $\proj_\C$ projects the current
iterate to the feasible set $\C$ using Euclidean projection. 
Throughout, we assume that this projection can be done efficiently. 
In general, most unconstrained optimization problems do not require this step, and can be omitted. The purpose of projected iterations in our algorithm is mostly theoretical, and will be clear in Section \ref{sec::theory}.

By the construction of $\Q^t$, \ALG will always be a descent algorithm.
It enjoys a quadratic convergence rate at start which 
transitions into a linear rate in the neighborhood of the minimizer. 
This behavior can be observed in Figure \ref{fig::convAndCoeff}.
The left plot in Figure 1 shows the convergence behavior of \ALG 
over different sub-sample sizes.
We observe that large sub-samples result in better convergence rates as expected. 
As the sub-sample size increases, slope of the linear phase
decreases, getting closer to that of quadratic phase at the transition point.
This phenomenon will be explained in detail in Section \ref{sec::theory}, by 
Theorems \ref{thm::uniformSampling} and \ref{thm::generalSampling}.
The right plot in Figure \ref{fig::convAndCoeff} demonstrates
how the coefficients of linear and quadratic phases depend on the thresholded rank.
Note that the coefficient of the quadratic phase increases with the rank threshold, 
whereas for the linear phase, relation is reversed.

\begin{figure*}[t]
\centering
 \includegraphics[width=3.2in]{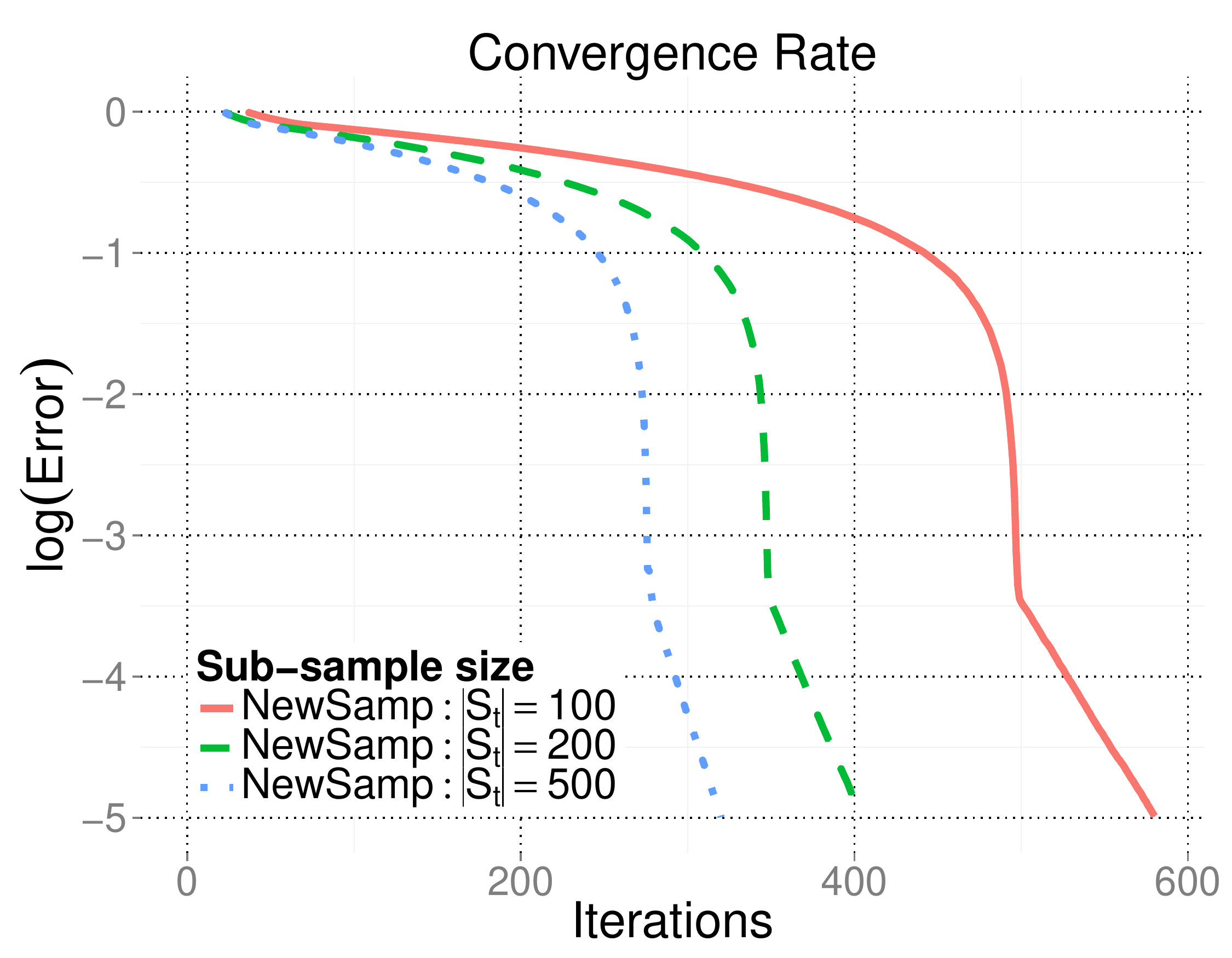}
 \includegraphics[width=3.2in]{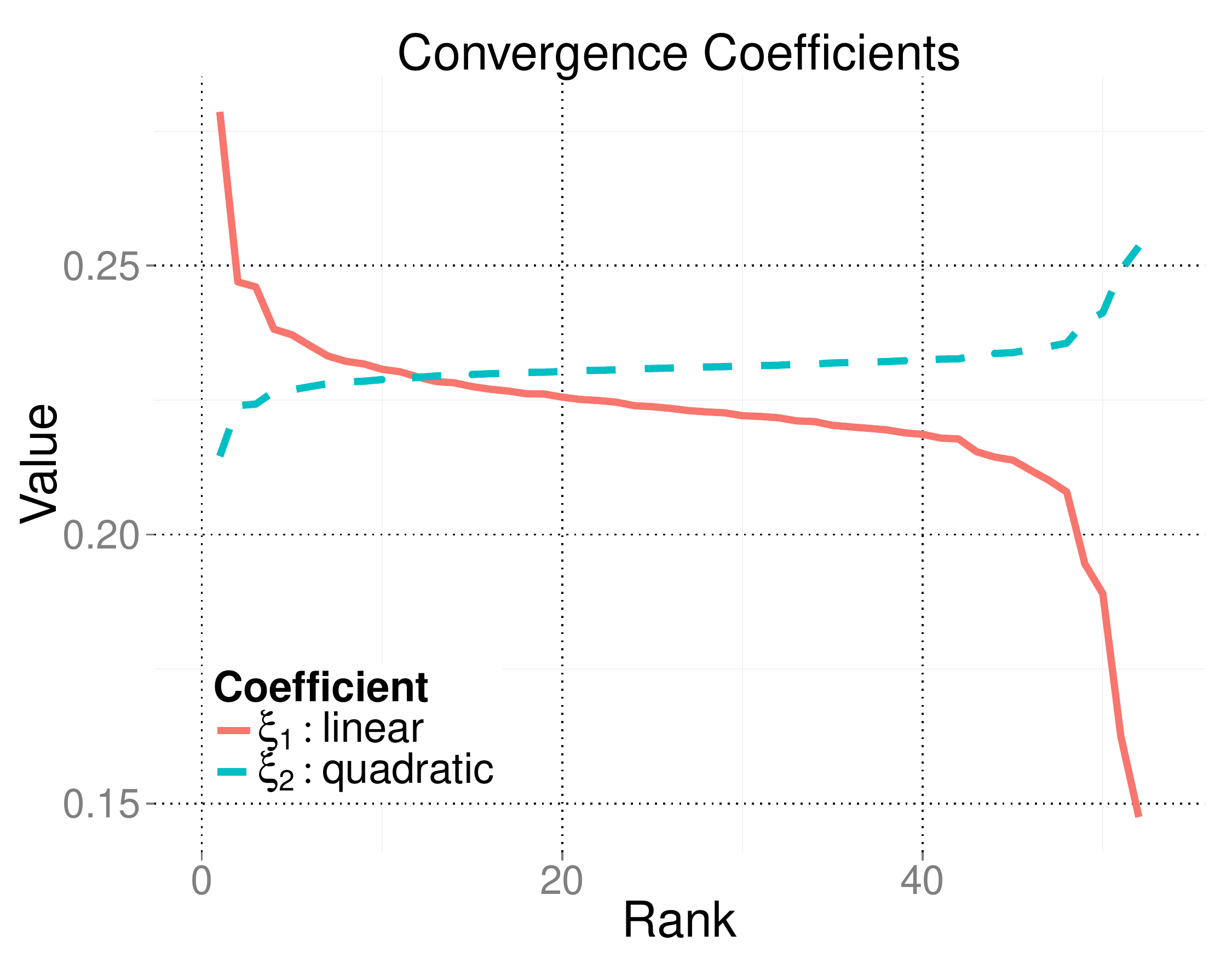}
  \caption{\label{fig::convAndCoeff}
Left plot demonstrates convergence rate of \ALG, which starts with a quadratic rate and transitions into linear convergence near the true minimizer. 
The right plot shows the effect of eigenvalue thresholding on the convergence coefficients. $x$-axis shows the number of kept eigenvalues. 
Plots are obtained using \emph{Covertype} dataset.
}
\end{figure*}

\section{Theoretical results}\label{sec::theory}

In this section, we provide the convergence analysis of \ALG based on two different
sub-sampling schemes:

\begin{itemize}
\item[S1:] \textbf{Independent sub-sampling}: 
At each iteration $t$, $S_t$ is uniformly sampled from $[n] = \{ 1,2,...,n\}$, 
independently from the sets $\{S_\tau\}_{\tau<t}$, 
with or without replacement.
\item[S2:] \textbf{Sequentially dependent sub-sampling}: 
At each iteration $t$, $S_t$ is sampled from $[n]$, 
based on a distribution which might depend on the previous sets 
$\{S_\tau\}_{\tau<t}$, but not on any randomness in the data.
\end{itemize}
The first sub-sampling scheme is simple and commonly used in optimization. 
One drawback is that the sub-sampled set at the current iteration is 
independent of the previous sub-samples, 
hence does not consider which of the samples were 
previously used to form the approximate curvature information. 
In order to prevent cycles and obtain better performance near the optimum,
one might want to increase the sample size as the iteration advances 
\cite{martens2010deep}, including previously unused samples. 
This process results in a sequence of dependent sub-samples 
which falls into the sub-sampling scheme S2.
In our theoretical analysis, we make the following assumptions:

\begin{assumption}[Lipschitz continuity] \label{as::Lipschitz}
For any subset $S \subset [n]$, there exists a constant $M_{|S|}$ 
depending on the size of $S$, such that $\forall \th, \th' \in \C$,
\eqn{
 \left\|\H_S(\th)- \H_{S}(\th') \right\|_2  \leq M_{|S|}\  \| \th - \th' \|_2.
}
\end{assumption}

\begin{assumption}[Bounded Hessian]\label{as::bound}
$\forall i =  1,2,...,n$, the Hessian of the function $f_i(\th)$, $\gradSth f_i(\th)$, is upper bounded by an absolute constant $K$, i.e.,
$$
\max\limits_{i\leq n}\left\| \gradSth f_i (\th)\right\|_2 \leq K.
$$
\end{assumption}

\subsection{Independent sub-sampling}

In this section, we assume that $S_t\subset [n]$ is sampled according to the sub-sampling scheme S1. 
In fact, many stochastic algorithms assume that $S_t$ is 
a uniform subset of $[n]$, because 
in this case the sub-sampled Hessian is 
an unbiased estimator of the full Hessian. 
That is, $\forall \th \in \C$,
$
\E \left [\H_{S_t}(\th) \right ] = \H_{[n]}(\th),
$
where the expectation is over the randomness in $S_t$.
We next show that for any scaling matrix $\Q^t$ that is formed by the sub-samples $S_t$,
iterations of the form Eq.~(\ref{eq::update}) will have a composite convergence rate, 
i.e., combination of a linear and a quadratic phases.
\begin{lemma}\label{lem::uniformSampling}
Assume that the parameter set $\C$ is convex and 
$S_t\subset [n]$ is based on sub-sampling scheme S1.
Further, let the Assumptions \ref{as::Lipschitz} and \ref{as::bound} hold and $\th_* \in \C$.
Then, for an absolute constant $c>0$, with probability at least $1-2/p$,
the updates of the form Eq.~(\ref{eq::update}) satisfy
\eqn{
\| \hth^{t+1} -\th_* \|_2 \leq&\ 
\xi_1^t \|\hth^{t} -\th_*\|_2
+\xi_2^t \|\hth^{t} -\th_*\|^2_2 ,
}
for coefficients $\xi_1^t$ and $\xi_2^t$ defined as
\eqn{
\xi_1^t = 
\left\| I - \step_t\Q^t\H_{S_t}(\hth^t)\right \|_2  
+  \step_t cK
 \left\|\Q^t\right\|_2 \sqrt{\frac{\log(p)}{|S_t|}}, \hspace{.8in}
 \xi_2^t = 
  \step_t
 \frac{ M_n}{2}
 \left\|\Q^t\right\|_2 .
}
\end{lemma}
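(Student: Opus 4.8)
The plan is to start from the update rule, insert the minimizer, and isolate the two sources of error: a deterministic term coming from how well $\step_t \Q^t$ approximates an inverse Hessian, and a stochastic term coming from the gap between the sub-sampled Hessian $\H_{S_t}(\hth^t)$ and the full Hessian $\H_{[n]}(\hth^t)$. First I would write, using the non-expansiveness of the Euclidean projection $\proj_\C$ and the fact that $\th_* = \proj_\C(\th_*)$,
\eqn{
\|\hth^{t+1} - \th_*\|_2
&= \left\| \proj_\C\!\big(\hth^t - \step_t \Q^t \gradth f(\hth^t)\big) - \proj_\C(\th_*)\right\|_2 \\
&\le \left\| \hth^t - \th_* - \step_t \Q^t \gradth f(\hth^t)\right\|_2.
}
Then I would use $\gradth f(\th_*) = 0$ (which holds because $\th_*\in\C$ is the unconstrained-type minimizer — this is where the projection assumption and $\th_*\in\C$ get used) and the integral form of Taylor's theorem, $\gradth f(\hth^t) = \big(\int_0^1 \H_{[n]}(\th_* + s(\hth^t-\th_*))\,ds\big)(\hth^t - \th_*)$, to rewrite the bracketed quantity as
$\big(I - \step_t\Q^t \bar\H\big)(\hth^t-\th_*)$ where $\bar\H = \int_0^1 \H_{[n]}(\th_*+s(\hth^t-\th_*))\,ds$.

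Next I would split $\bar\H = \H_{S_t}(\hth^t) + \big(\H_{[n]}(\hth^t) - \H_{S_t}(\hth^t)\big) + \big(\bar\H - \H_{[n]}(\hth^t)\big)$. The first piece gives the term $\|I - \step_t\Q^t\H_{S_t}(\hth^t)\|_2\,\|\hth^t-\th_*\|_2$ verbatim. The third piece is controlled by Assumption~\ref{as::Lipschitz} applied with $S=[n]$: since $\bar\H$ is an average of $\H_{[n]}$ evaluated along the segment from $\th_*$ to $\hth^t$, we get $\|\bar\H - \H_{[n]}(\hth^t)\|_2 \le \tfrac{M_n}{2}\|\hth^t-\th_*\|_2$, and after multiplying by $\step_t\|\Q^t\|_2\|\hth^t-\th_*\|_2$ this produces exactly the quadratic coefficient $\xi_2^t = \step_t \tfrac{M_n}{2}\|\Q^t\|_2$. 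The second piece is the sub-sampling fluctuation: I would bound $\|\H_{[n]}(\hth^t) - \H_{S_t}(\hth^t)\|_2$ using a matrix concentration inequality (matrix Bernstein / matrix Hoeffding), invoking Assumption~\ref{as::bound} to bound each summand $\gradSth f_i(\hth^t)$ by $K$ in operator norm, which yields a bound of order $K\sqrt{\log(p)/|S_t|}$ with probability at least $1-2/p$; multiplying by $\step_t\|\Q^t\|_2\|\hth^t-\th_*\|_2$ gives the stochastic part of $\xi_1^t$.

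The main obstacle I anticipate is the concentration step: one must be careful that $\hth^t$ may itself depend on the randomness used to draw $S_t$ in earlier iterations, but under scheme S1 the set $S_t$ is drawn independently of $\{S_\tau\}_{\tau<t}$, so conditionally on $\hth^t$ the matrix $\H_{S_t}(\hth^t)$ is a sum of independent bounded random matrices with mean $\H_{[n]}(\hth^t)$, and the matrix Bernstein bound applies conditionally; one then takes a union bound or simply notes the tail bound holds uniformly in the conditioning value. Getting the constant $c$ and the exact $\log(p)$ power from the dimension-$p$ matrix concentration bound, and making sure the failure probability aggregates to $2/p$ rather than something $t$-dependent, is the delicate bookkeeping. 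Everything else — the projection contraction, the Taylor expansion, and the Lipschitz bound on $\bar\H$ — is routine. I would assemble the three pieces by the triangle inequality and collect terms into the stated $\xi_1^t\|\hth^t-\th_*\|_2 + \xi_2^t\|\hth^t-\th_*\|_2^2$ form.
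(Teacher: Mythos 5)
Your proposal is correct and follows essentially the same route as the paper's proof: non-expansiveness of the projection, the integral Taylor expansion around $\th_*$, the three-way split of the averaged Hessian into $\H_{S_t}(\hth^t)$, the sub-sampling fluctuation, and the Lipschitz remainder, with matrix Bernstein (Operator-Bernstein, valid with or without replacement under scheme S1) supplying the $cK\sqrt{\log(p)/|S_t|}$ term at probability $1-2/p$. Your remark about conditioning on $\hth^t$ before applying the concentration bound is a point the paper leaves implicit, but the argument is otherwise identical.
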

\begin{remark}
If the initial point $\hth^0$ is close to $\th_*$, the algorithm will start with 
a quadratic rate of convergence which will transform into linear 
rate later in the close neighborhood of the optimum. 
\end{remark}
The above lemma holds for any matrix $\Q^t$.
In particular, if we choose $\Q^t = \H_{S_t}^{-1}$,
we obtain a bound for the simple sub-sampled Hessian method.
In this case, the coefficients 
$\xi_1^t$ and $\xi_2^t$ depend on $\| \Q^t\|_2 = 1/\lambda^t_p$ where
$\lambda^t_p$ is the smallest eigenvalue of the sub-sampled Hessian.
Note that $\lambda^t_p$ can be arbitrarily small which might blow up 
both of the coefficients. In the following, we will see how \ALG remedies this issue.

\begin{theorem}\label{thm::uniformSampling}
Let the assumptions in Lemma \ref{lem::uniformSampling} hold.
Denote by $\lambda^t_i$, the $i$-th eigenvalue of $\H_{S_t}(\hth^t)$
where $\hth^t$ is given by \ALG at iteration step $t$.
If the step size satisfies
\eq{\label{eq::stepCond}
\step_t \leq \frac{2}{1+{\lambda^t_p}/{\lambda^t_{r+1}}},
}
then we have, 
with probability at least $1-2/p$,
\eqn{
\| \hth^{t+1} -\th_* \|_2 \leq&\ 
\xi_1^t \|\hth^{t} -\th_*\|_2
+\xi_2^t \|\hth^{t} -\th_*\|^2_2 ,
}
for an absolute constant $c>0$, for the coefficients $\xi_1^t$ and $\xi_2^t$ are defined as
\eqn{
\xi_1^t = 
1-\step_t \frac{\lambda^t_p}{\lambda^t_{r+1}}
+ \step_t\frac{cK}
{\lambda^t_{r+1}} \sqrt{\frac{\log(p)}{|S_t|}}, \hspace{.8in}
 \xi_2^t = 
\step_t\frac{ M_n }
{2\lambda^t_{r+1}}
.
}
\end{theorem}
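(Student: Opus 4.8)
The plan is to derive Theorem~\ref{thm::uniformSampling} directly from Lemma~\ref{lem::uniformSampling} by specializing the generic bound to the particular scaling matrix $\Q^t$ produced by \ALG in Eq.~(\ref{eq::matrixQinv}). The lemma already supplies, with probability at least $1-2/p$, the composite estimate
\eqn{
\| \hth^{t+1} -\th_* \|_2 \leq
\xi_1^t \|\hth^{t} -\th_*\|_2 + \xi_2^t \|\hth^{t} -\th_*\|^2_2,
}
with $\xi_1^t = \| I - \step_t\Q^t\H_{S_t}(\hth^t)\|_2 + \step_t cK \|\Q^t\|_2 \sqrt{\log(p)/|S_t|}$ and $\xi_2^t = \step_t (M_n/2) \|\Q^t\|_2$. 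So the entire task reduces to computing two spectral quantities for our specific $\Q^t$: its operator norm $\|\Q^t\|_2$, and the operator norm $\| I - \step_t\Q^t\H_{S_t}(\hth^t)\|_2$.

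First I would observe that $\Q^t$ and $\H_{S_t}(\hth^t)$ are simultaneously diagonalizable: writing $\H_{S_t}(\hth^t) = \sum_{i=1}^p \lambda^t_i u_i u_i^T$ in its eigendecomposition, the construction $\Q^t = \lambda_{r+1}^{-1}\I_p + \mathbf{U}_r(\mathbf{\Lambda}_r^{-1} - \lambda_{r+1}^{-1}\I_r)\mathbf{U}_r^T$ has eigenvalues $1/\lambda^t_i$ for $i \le r$ and $1/\lambda^t_{r+1}$ for $i \ge r+1$, in the same eigenbasis $\{u_i\}$. Since the eigenvalues are ordered $\lambda^t_1 \ge \cdots \ge \lambda^t_p > 0$, the largest eigenvalue of $\Q^t$ is $1/\lambda^t_{r+1}$ (indeed $1/\lambda^t_{r+1} \ge 1/\lambda^t_i$ for $i\le r$ because $\lambda^t_{r+1}\le\lambda^t_i$), hence $\|\Q^t\|_2 = 1/\lambda^t_{r+1}$. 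Substituting this into the $\xi_2^t$ formula immediately gives $\xi_2^t = \step_t M_n/(2\lambda^t_{r+1})$, the claimed expression.

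Next I would compute $\| I - \step_t\Q^t\H_{S_t}(\hth^t)\|_2$. In the common eigenbasis, $\Q^t\H_{S_t}(\hth^t)$ has eigenvalue $1$ on the top-$r$ subspace (since $\Q^t$ inverts $\H_{S_t}$ there) and eigenvalue $\lambda^t_i/\lambda^t_{r+1}$ for $i \ge r+1$. Therefore $I - \step_t\Q^t\H_{S_t}(\hth^t)$ has eigenvalues $1-\step_t$ (multiplicity $\ge r$) and $1 - \step_t\lambda^t_i/\lambda^t_{r+1}$ for $i \ge r+1$, and its operator norm is the maximum of the absolute values of these. For $i \ge r+1$ the ratios $\lambda^t_i/\lambda^t_{r+1}$ lie in $(0,1]$, so $1-\step_t\lambda^t_i/\lambda^t_{r+1}$ is largest (over $i$) when $\lambda^t_i$ is smallest, i.e.\ $i=p$, giving $1-\step_t\lambda^t_p/\lambda^t_{r+1}$; and it is at least $1-\step_t$ (attained at $i=r+1$). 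I would then argue that under the step-size condition Eq.~(\ref{eq::stepCond}), namely $\step_t \le 2/(1+\lambda^t_p/\lambda^t_{r+1})$, every one of these eigenvalues lies in $[-(1-\step_t\lambda^t_p/\lambda^t_{r+1}),\, 1-\step_t\lambda^t_p/\lambda^t_{r+1}]$ with the quantity $1 - \step_t\lambda^t_p/\lambda^t_{r+1}$ being nonnegative: indeed the smallest eigenvalue $1-\step_t$ satisfies $1-\step_t \ge -(1-\step_t\lambda^t_p/\lambda^t_{r+1})$ precisely because $\step_t(1+\lambda^t_p/\lambda^t_{r+1}) \le 2$, while nonnegativity of $1-\step_t\lambda^t_p/\lambda^t_{r+1}$ follows since $\step_t \le 2/(1+\lambda^t_p/\lambda^t_{r+1}) \le \lambda^t_{r+1}/\lambda^t_p$ would need checking — more simply, $\step_t\le 2/(1+\lambda^t_p/\lambda^t_{r+1})$ and $\lambda^t_p\le\lambda^t_{r+1}$ give $\step_t\lambda^t_p/\lambda^t_{r+1}\le 2\lambda^t_p/(\lambda^t_{r+1}+\lambda^t_p)\le 1$. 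Hence $\| I - \step_t\Q^t\H_{S_t}(\hth^t)\|_2 = 1 - \step_t\lambda^t_p/\lambda^t_{r+1}$. Plugging this and $\|\Q^t\|_2 = 1/\lambda^t_{r+1}$ into the expression for $\xi_1^t$ from the lemma yields exactly $\xi_1^t = 1 - \step_t\lambda^t_p/\lambda^t_{r+1} + \step_t (cK/\lambda^t_{r+1})\sqrt{\log(p)/|S_t|}$, completing the proof.

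The only genuinely delicate point is the case analysis that shows the step-size condition Eq.~(\ref{eq::stepCond}) forces the spectrum of $I - \step_t\Q^t\H_{S_t}(\hth^t)$ to be dominated in absolute value by the single eigenvalue $1-\step_t\lambda^t_p/\lambda^t_{r+1}$ (rather than by $|1-\step_t|$ becoming large, or by a negative eigenvalue overtaking it); everything else is bookkeeping of a simultaneous diagonalization. I expect this verification — essentially the standard "optimal damping" argument that Eq.~(\ref{eq::stepCond}) is exactly the condition under which $1-\step_t \ge -(1-\step_t\lambda^t_p/\lambda^t_{r+1})$ — to be the main, if modest, obstacle.
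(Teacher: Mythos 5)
Your proposal is correct and follows essentially the same route as the paper: the paper likewise plugs the \ALG{} scaling matrix into Lemma~\ref{lem::uniformSampling}, noting $\|\Q^t\|_2 = 1/\lambda^t_{r+1}$ and $\| I - \step_t\Q^t\H_{S_t}(\hth^t)\|_2 = \max_{i>r}|1-\step_t\lambda^t_i/\lambda^t_{r+1}|$ (the top-$r$ eigenvalues $1-\step_t$ are absorbed by the $i=r+1$ term), and then invokes the step-size condition. You simply spell out the spectral case analysis that the paper leaves implicit, and it checks out.
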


\ALG has a composite convergence rate where $\xi_1^t$ and $\xi_2^t$ are 
the coefficients of the linear and the quadratic terms, respectively (See the right plot in Figure \ref{fig::convAndCoeff}). 
We observe that the sub-sampling size has a significant effect on the linear term, 
whereas the quadratic term is governed by the Lipschitz constant. 
We emphasize that the case $\step_t=1$ is feasible for the conditions of Theorem \ref{thm::uniformSampling}.
In the case of quadratic functions, since the Lipschitz constant is 0 , we obtain $\xi_2^t=0$ 
and the algorithm converges linearly. Following corollary summarizes this case.

\begin{corollary}[Quadratic functions]\label{cor::quadUniform}
Let the assumptions of Theorem \ref{thm::uniformSampling} hold. Further, assume that
 $\forall i \in [n]$,
the functions $\th:\reals^p \to f_i(\th)$ are quadratic.
Then, for $\hth^t$ given by \ALG at iteration step $t$, 
for the coefficient $\xi^t_1$ defined as in Theorem \ref{thm::uniformSampling}, 
with probability at least $1-2/p$, 
we have
\eq{\label{eq::quadRate}
\| \hth^{t+1} -\th_* \|_2 \leq&\ 
\xi_1^t \|\hth^{t} -\th_*\|_2
.
}
\end{corollary}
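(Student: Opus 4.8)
The plan is to derive Corollary~\ref{cor::quadUniform} as a direct specialization of Theorem~\ref{thm::uniformSampling}. The observation is that for a quadratic function $f_i$, the Hessian $\gradSth f_i(\th)$ is a constant matrix, independent of $\th$; consequently, for any subset $S\subset[n]$ and any $\th,\th'\in\C$ we have $\H_S(\th)=\H_S(\th')$, so Assumption~\ref{as::Lipschitz} holds with $M_{|S|}=0$ for every $S$, and in particular $M_n=0$. I would first state this explicitly, since it is the only structural fact needed.

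Next I would invoke Theorem~\ref{thm::uniformSampling}. Its hypotheses are exactly the hypotheses of Lemma~\ref{lem::uniformSampling}: $\C$ convex, $S_t$ drawn according to scheme S1, Assumptions~\ref{as::Lipschitz} and~\ref{as::bound} in force, and $\th_*\in\C$. All of these are assumed in the corollary, and the quadratic assumption only strengthens Assumption~\ref{as::Lipschitz} (it guarantees a valid constant $M_{|S|}=0$). I would also note that the step-size condition~\eqref{eq::stepCond} is still required and is inherited from the theorem's statement; it is satisfied in particular by $\step_t=1$, but no special comment is needed since the corollary simply says ``let the assumptions of Theorem~\ref{thm::uniformSampling} hold.'' Applying the theorem, with probability at least $1-2/p$,
\eqn{
\|\hth^{t+1}-\th_*\|_2 \leq \xi_1^t\|\hth^{t}-\th_*\|_2 + \xi_2^t\|\hth^{t}-\th_*\|_2^2,
}
with $\xi_1^t$ as in the theorem and $\xi_2^t = \step_t M_n/(2\lambda^t_{r+1})$.

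Finally I would substitute $M_n=0$, which gives $\xi_2^t=0$, so the quadratic term vanishes and the bound collapses to~\eqref{eq::quadRate}. Since $\xi_1^t$ is unchanged, it retains the form stated in Theorem~\ref{thm::uniformSampling}, as the corollary asserts. There is no real obstacle here: the entire argument is the single substitution $M_n=0$, justified by the constancy of the Hessian of a quadratic. The only point worth a sentence of care is confirming that setting $M_{|S|}=0$ is legitimate in Assumption~\ref{as::Lipschitz} — it is, because $0$ is a valid (indeed the optimal) Lipschitz constant for a constant matrix-valued map — and that nothing else in the proof of Theorem~\ref{thm::uniformSampling} or Lemma~\ref{lem::uniformSampling} implicitly required $M_n>0$.
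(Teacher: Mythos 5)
Your proposal is correct and matches the paper's own reasoning exactly: the paper justifies the corollary by the one-line observation that the Lipschitz constant of the Hessian vanishes for quadratic $f_i$, so $M_n=0$ forces $\xi_2^t=0$ in Theorem~\ref{thm::uniformSampling} and the composite bound collapses to the linear one. Your extra care in checking that $M_{|S|}=0$ is an admissible constant in Assumption~\ref{as::Lipschitz} is a reasonable, if minor, addition.
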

%
%

\subsection{Sequentially dependent sub-sampling}\label{sec::dependent}

Here, we assume that the sub-sampling scheme S2 
is used to generate $\{ S_\tau\}_{\tau\geq1}$. Distribution of
sub-sampled sets may depend on each other, but not on any randomness in the dataset.
Examples include fixed sub-samples as well as 
sub-samples of increasing size, sequentially covering unused data.
In addition to Assumptions \ref{as::Lipschitz}-\ref{as::bound}, we assume the following.
\begin{assumption}[i.i.d. observations]\label{as::iidFunctions}
Let $z_1,z_2,...,z_n \in \mathrm{Z}$ be i.i.d. observations from a distribution $\D$. 
For a fixed $\th \in \reals^p$ and $\forall i \in [n]$, we assume that the functions $\{f_i\}_{i=1}^n$ satisfy 
$
f_i(\th) = \varphi(z_i,\th),
$
 for some function $\varphi : \mathrm{Z}\times\reals^p \to \reals$. 
\end{assumption}
Most statistical learning algorithms can be formulated as above, e.g., in classification problems, 
one has access to i.i.d. samples $\{(y_i,x_i)\}_{i=1}^n$ where $y_i$ and $x_i$ 
denote the class label and the covariate, and $\varphi$ measures the 
classification error (See Section \ref{sec::examples} for examples).
For the sub-sampling scheme S2, an analogue of 
Lemma \ref{lem::uniformSampling} is stated in Appendix as Lemma \ref{lem::generalSampling}, which immediately leads to the following theorem.
\begin{theorem}\label{thm::generalSampling}
Assume that the parameter set $\C$ is convex and 
$S_t\subset [n]$ is based on the sub-sampling scheme S2.
Further, let the Assumptions \ref{as::Lipschitz}, \ref{as::bound} and 
\ref{as::iidFunctions} hold, almost surely.
Conditioned on the event $\mathcal{E} = \{ \th_* \in \C \}$,
if the step size satisfies Eq.~\ref{eq::stepCond},
then
for $\hth^t$ given by \ALG at iteration $t$, 
with probability at least $1-c_\mathcal{E}\, e^{-p}$ for $c_\mathcal{E} = c/\P(\mathcal{E})$, we have

\eqn{
\| \hth^{t+1} -\th_* \|_2 \leq&\ 
\xi_1^t \|\hth^{t} -\th_*\|_2
+\xi_2^t \|\hth^{t} -\th_*\|^2_2 ,
}
for the coefficients $\xi_1^t$ and $\xi_2^t$ defined as
\eqn{
\xi_1^t& = 
 1-\step_t \frac{\lambda_p^t}{\lambda_{r+1}^t}
+ \step_t\frac{c'K}
{\lambda_{r+1}^t} 
\sqrt{
  \frac{p}{|S_t|} 
\log 
\bigg(
\frac{  \diam^2 \left ( M_n + M_{|S_t|}\right)^2 |S_t|}{K^2}
\bigg)
}
, \hspace{.6in}
 \xi_2^t &= 
\step_t
\frac{    M_n}
{2\lambda_{r+1}^t}
,
}
where $c,c'>0$ are absolute constants and $\lambda_i^t$ denotes the $i$-th eigenvalue of $\H_{S_t}(\hth^t)$.
\end{theorem}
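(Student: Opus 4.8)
The plan is to reduce Theorem~\ref{thm::generalSampling} to the deterministic decomposition already isolated in Lemma~\ref{lem::uniformSampling} (equivalently, its analogue Lemma~\ref{lem::generalSampling}), and then control the single stochastic quantity that survives, namely the operator-norm deviation $\|\H_{S_t}(\hth^t)-\H_{[n]}(\hth^t)\|_2$, uniformly over $\th\in\C$. First I would write the update error as
\eqn{
\hth^{t+1}-\th_* = \proj_\C(\hth^t-\step_t\Q^t\grad f(\hth^t)) - \proj_\C(\th_*),
}
use that Euclidean projection onto a convex set is $1$-Lipschitz to drop $\proj_\C$, insert $\grad f(\th_*)=0$, and apply the integral form of Taylor's theorem to write $\grad f(\hth^t) = \left(\int_0^1 \H_{[n]}(\th_*+s(\hth^t-\th_*))\,ds\right)(\hth^t-\th_*)$. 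Splitting the averaged Hessian as $\H_{S_t}(\hth^t)$ plus two correction terms — one controlled by Assumption~\ref{as::Lipschitz} (giving the $M_n$ factor and the quadratic term $\xi_2^t$) and one equal to $\H_{S_t}(\hth^t)-\H_{[n]}(\hth^t)$ — yields
\eqn{
\|\hth^{t+1}-\th_*\|_2 \le \|\I-\step_t\Q^t\H_{S_t}(\hth^t)\|_2\,\|\hth^t-\th_*\|_2 + \step_t\|\Q^t\|_2\Big(\tfrac{M_n}{2}\|\hth^t-\th_*\|_2^2 + \|\H_{S_t}(\hth^t)-\H_{[n]}(\hth^t)\|_2\,\|\hth^t-\th_*\|_2\Big).
}
For the first term, the spectral construction of $\Q^t$ in Eq.~(\ref{eq::matrixQinv}) gives $\I-\step_t\Q^t\H_{S_t}(\hth^t)$ a spectrum lying in $\{1-\step_t\lambda_p^t/\lambda_{r+1}^t\}\cup\{1-\step_t\}$ — on the top-$r$ eigenspace $\Q^t$ inverts $\H_{S_t}$ exactly, so those eigenvalues are $1-\step_t$, and on the complement they are $1-\step_t\lambda_i^t/\lambda_{r+1}^t$ for $i\ge r+1$, the extreme being $i=p$ — and under the step-size condition Eq.~(\ref{eq::stepCond}) this operator norm equals $1-\step_t\lambda_p^t/\lambda_{r+1}^t$. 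Likewise $\|\Q^t\|_2 = 1/\lambda_{r+1}^t$. This reproduces the shape of the claimed $\xi_1^t,\xi_2^t$ with the deviation term still to be bounded.

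The remaining and genuinely stochastic step is to show that, with probability at least $1-c_\mathcal{E}e^{-p}$,
\eqn{
\|\H_{S_t}(\hth^t)-\H_{[n]}(\hth^t)\|_2 \le c'K\sqrt{\tfrac{p}{|S_t|}\log\!\big(\tfrac{\diam^2(M_n+M_{|S_t|})^2|S_t|}{K^2}\big)}.
}
Here the difficulty, compared with Lemma~\ref{lem::uniformSampling}, is that under scheme S2 the sampled set $S_t$ may depend on previous sets and hence $\hth^t$ is not independent of $S_t$, so a matrix Bernstein/Azuma bound at a fixed $\th$ (which gave the $\sqrt{\log p/|S_t|}$ rate in the S1 case) is not enough — we need the deviation to be small \emph{simultaneously} for all $\th\in\C$. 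I would therefore invoke Assumption~\ref{as::iidFunctions}: the randomness in the data $z_1,\dots,z_n$ is i.i.d.\ and, crucially, the choice of $S_t$ is measurable with respect to an independent source, so conditionally we may treat $\{\H_{S_t}(\th):\th\in\C\}$ as an empirical process indexed by $\C$. The route is a uniform (in $\th$) matrix concentration bound: discretize $\C$ by an $\varepsilon$-net, apply matrix Bernstein at each net point using Assumption~\ref{as::bound} ($\|\gradSth f_i\|_2\le K$, so the summands are bounded and the variance proxy is $O(K^2/|S_t|)$), take a union bound over the net (of cardinality $\sim(\diam/\varepsilon)^p$, which produces the $\sqrt{p}$ and the $\log$ factors), and extend from the net to all of $\C$ using the Lipschitz Assumption~\ref{as::Lipschitz} on $\th\mapsto\H_{S}(\th)$ for both $S=S_t$ and $S=[n]$ (this is where $M_n+M_{|S_t|}$ and $\diam$ enter the logarithm). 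Optimizing $\varepsilon$ — balancing the net cardinality against the discretization error $(M_n+M_{|S_t|})\varepsilon$ — yields exactly the stated bound. Finally, since everything so far was conditional on the data-independent sampling, the failure probability is bounded on the event $\mathcal{E}=\{\th_*\in\C\}$ by dividing by $\P(\mathcal{E})$, giving $c_\mathcal{E}=c/\P(\mathcal{E})$.

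I expect the main obstacle to be precisely this uniform-in-$\th$ matrix concentration: getting the $\sqrt{p/|S_t|}$ scaling with the correct logarithmic factor requires care in choosing the net resolution and in applying a matrix Bernstein inequality to bounded (non-centered-at-zero) summands, and one must be scrupulous about what is conditioned on — the data $z_i$ are random but fixed across iterations, whereas $S_t$ is random but data-independent, so the empirical-process argument runs over the data randomness with $S_t$ held (conditionally) fixed, or vice versa, and the bound must hold on the favorable event $\mathcal{E}$. Once that covering argument is in place, assembling the pieces via the triangle inequality and the step-size condition is routine and mirrors the proof of Theorem~\ref{thm::uniformSampling}. It is worth noting that the net argument is what forces the weaker $\sqrt{p/|S_t|}$ rate here versus the $\sqrt{\log p/|S_t|}$ rate available under S1, where independence lets one apply concentration at the single point $\hth^t$ directly.
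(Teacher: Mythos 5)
Your proposal is correct and follows essentially the same route as the paper: the deterministic decomposition of Lemma~\ref{lem::uniformSampling} carried over to Lemma~\ref{lem::generalSampling}, a uniform-in-$\th$ bound on $\sup_{\th\in\C}\|\H_{S_t}(\th)-\H_{[n]}(\th)\|_2$ via a $\Delta$-net over $\C$ combined with matrix Hoeffding/Bernstein at net points, the Lipschitz extension contributing the $(M_n+M_{|S_t|})\diam$ factors, optimization of the net resolution, and the spectral identification $\|\I-\step_t\Q^t\H_{S_t}\|_2=1-\step_t\lambda_p^t/\lambda_{r+1}^t$ under the step-size condition. Your closing observations about why the union bound over the net degrades the rate to $\sqrt{p/|S_t|}$ and how conditioning on $\mathcal{E}$ yields $c_\mathcal{E}=c/\P(\mathcal{E})$ match the paper's treatment.
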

%
%
Compared to the Theorem \ref{thm::uniformSampling}, 
we observe that the coefficient of the quadratic
term does not change. This is due to Assumption \ref{as::Lipschitz}. 
However, the bound on the linear term is worse, 
since we use the uniform bound over the convex parameter set $\C$. 
The same order of magnitude is also observed by \cite{erdogdu2015newton-stein-long}, 
which relies on a similar proof technique.
Similar to Corollary \ref{cor::quadUniform}, we have the following result for the quadratic functions.
\begin{corollary}[Quadratic functions]\label{cor::quadGeneral}
Let the assumptions of Theorem \ref{thm::generalSampling} hold. Further assume that
 $\forall i \in [n]$,
the functions $\th \to f_i(\th)$ are quadratic.
Then, conditioned on the event $\mathcal{E}$, with probability at least $1-c_\mathcal{E}\, e^{-p}$, \ALG iterates satisfy
\eqn{
\| \hth^{t+1} -\th_* \|_2 \leq&\ 
\xi_1^t \|\hth^{t} -\th_*\|_2
,
}
for coefficient $\xi_1^t$ defined as in Theorem \ref{thm::generalSampling}.
\end{corollary}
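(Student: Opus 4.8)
The plan is to obtain Corollary~\ref{cor::quadGeneral} as a direct specialization of Theorem~\ref{thm::generalSampling}, the only new observation being that the quadratic error term drops out when the $f_i$ are quadratic. First I would record that if each $\th\mapsto f_i(\th)$ is quadratic, then every Hessian $\gradSth f_i(\th)$ — hence $\H_S(\th)$ for any $S\subset[n]$ — is independent of $\th$; in particular Assumption~\ref{as::Lipschitz} holds with $M_{|S|}=0$ for every $S$. Therefore the first-order identity $\gradth f(\hth^t)=\gradth f(\th_*)+\H_{[n]}\,(\hth^t-\th_*)=\H_{[n]}\,(\hth^t-\th_*)$ holds \emph{exactly}, with no Taylor remainder, using $\gradth f(\th_*)=0$.

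Next, conditioned on $\mathcal{E}=\{\th_*\in\C\}$ we have $\proj_\C(\th_*)=\th_*$, so by non-expansiveness of the Euclidean projection the update~(\ref{eq::update}) gives
\[
\|\hth^{t+1}-\th_*\|_2 \;\le\; \big\|\big(\I_p-\step_t\Q^t\H_{[n]}\big)(\hth^t-\th_*)\big\|_2 \;\le\; \big\|\I_p-\step_t\Q^t\H_{[n]}\big\|_2\,\|\hth^t-\th_*\|_2 .
\]
It then remains to control $\|\I_p-\step_t\Q^t\H_{[n]}\|_2$. Splitting $\H_{[n]}=\H_{S_t}(\hth^t)+\big(\H_{[n]}-\H_{S_t}(\hth^t)\big)$ and applying the triangle inequality,
\[
\big\|\I_p-\step_t\Q^t\H_{[n]}\big\|_2 \;\le\; \big\|\I_p-\step_t\Q^t\H_{S_t}(\hth^t)\big\|_2 + \step_t\|\Q^t\|_2\,\big\|\H_{[n]}-\H_{S_t}(\hth^t)\big\|_2 ,
\]
and I would bound each piece exactly as in the proof of Theorem~\ref{thm::generalSampling}: the construction~(\ref{eq::matrixQinv}) of $\Q^t$ together with the step-size condition~(\ref{eq::stepCond}) gives $\|\I_p-\step_t\Q^t\H_{S_t}(\hth^t)\|_2 = 1-\step_t\lambda^t_p/\lambda^t_{r+1}$ and $\|\Q^t\|_2 = 1/\lambda^t_{r+1}$, while the uniform deviation bound behind Lemma~\ref{lem::generalSampling} controls $\|\H_{[n]}-\H_{S_t}(\hth^t)\|_2$ and produces the second summand of $\xi_1^t$ with probability at least $1-c_\mathcal{E}e^{-p}$. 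That step is in fact easier here: since $\H_S(\cdot)$ is constant in $\th$ for quadratics, a single matrix concentration inequality suffices and the $\varepsilon$-net over $\C$ is unnecessary. Combining the three displays yields $\|\hth^{t+1}-\th_*\|_2\le\xi_1^t\|\hth^t-\th_*\|_2$ with $\xi_1^t$ as stated, the quadratic coefficient being $\xi_2^t=\step_t M_n/(2\lambda^t_{r+1})=0$.

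I do not anticipate a genuine obstacle, since this is essentially Theorem~\ref{thm::generalSampling} with $M_n=M_{|S_t|}=0$; the one thing needing care is purely cosmetic, namely that the logarithmic factor in $\xi_1^t$ is written in terms of $M_n+M_{|S_t|}$, which now vanishes. I would resolve this by noting that for quadratics the covering step collapses, so the cleaner pointwise concentration bound — without that logarithm, or with any nonnegative constants in its place — already implies the claimed form of $\xi_1^t$; alternatively one simply keeps the statement of Theorem~\ref{thm::generalSampling} verbatim, reading $M_n,M_{|S_t|}$ as arbitrary valid Lipschitz bounds. A second minor point, handled exactly as in Theorem~\ref{thm::generalSampling}, is the bookkeeping of the conditioning event $\mathcal{E}$ and the resulting constant $c_\mathcal{E}=c/\P(\mathcal{E})$.
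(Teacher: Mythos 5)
Your proof is correct and follows the same route the paper intends: Corollary~\ref{cor::quadGeneral} is stated without a separate proof precisely because it is the immediate specialization of Theorem~\ref{thm::generalSampling} (via Lemma~\ref{lem::generalSampling}) to the case where the Hessians are constant, so $M_n = M_{|S_t|} = 0$ and the quadratic coefficient $\xi_2^t$ vanishes. Your side remark about the degenerate logarithmic factor in $\xi_1^t$ when the Lipschitz constants are zero is a fair observation and your resolution (the covering-net step collapses for constant Hessians, so a single pointwise concentration bound suffices) is sound.
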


\subsection{Dependence of coefficients on $t$ and convergence guarantees}

The coefficients $\xi_1^t$ and $\xi_2^t$ depend on the iteration step 
which is an undesirable aspect of the above results. However, these 
constants can be well approximated by their analogues $\xi_1^*$ and $\xi_2^*$ 
evaluated at the optimum which are defined by simply replacing $\lambda_j^t$ 
with $\lambda_j^*$ in their definition, where the latter is the $j$-th eigenvalue of 
full-Hessian at $\theta_*$.
For the sake of simplicity, we only consider the case where the functions $\th \to f_i(\th)$ are quadratic.
\begin{theorem}\label{thm::coefficients}
Assume that the functions $ f_i(\th)$ are quadratic,
$S_t$ is based on scheme S1 and $\step_t=1$.
Let the full Hessian at $\th_*$ be lower bounded by a constant $k$. 
Then for sufficiently large $|S_t|$, we have, with probability $1-2/p$
\eqn{
\left | \xi_1^t - \xi_1^*\right| \leq 
\frac{c_1K \sqrt{{\log(p)}/{|S_t|}}}
{k\big(k- c_2K\sqrt{{\log(p)}/{|S_t|}}\big) }\coloneqq \delta,
}
for some absolute constants $c_1,c_2$.
\end{theorem}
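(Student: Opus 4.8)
The plan is to control $|\xi_1^t - \xi_1^*|$ by breaking it into the two natural pieces in the definition of $\xi_1$ and showing each is small once $|S_t|$ is large enough. Since the functions are quadratic and $\step_t=1$, the linear coefficient from Theorem~\ref{thm::uniformSampling} is
\eqn{
\xi_1^t = 1 - \frac{\lambda_p^t}{\lambda_{r+1}^t} + \frac{cK}{\lambda_{r+1}^t}\sqrt{\frac{\log p}{|S_t|}},
}
and $\xi_1^*$ is the same expression with $\lambda_j^t$ replaced by the eigenvalue $\lambda_j^*$ of the full Hessian at $\th_*$. Because the $f_i$ are quadratic, the Hessians $\H_{S_t}$ and $\H_{[n]}$ are constant in $\th$, so $\H_{S_t}(\hth^t)=\H_{S_t}$ regardless of the iterate; this removes any dependence on $\|\hth^t-\th_*\|$ and reduces the problem to comparing the spectrum of the sub-sampled Hessian $\H_{S_t}$ with that of the full Hessian $\H_{[n]}$ (which in turn is close to the Hessian at $\th_*$, here exactly equal since quadratic).

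First I would invoke the matrix concentration bound already used to prove Lemma~\ref{lem::uniformSampling} — namely that under Assumption~\ref{as::bound}, with probability at least $1-2/p$,
\eqn{
\big\| \H_{S_t} - \H_{[n]} \big\|_2 \leq cK\sqrt{\frac{\log p}{|S_t|}} =: \Delta.
}
By Weyl's inequality this gives $|\lambda_j^t - \lambda_j^*|\leq \Delta$ for every $j$, and in particular, using the lower bound $\lambda_p^*\geq k$ on the full Hessian, we get $\lambda_{r+1}^t \geq \lambda_p^t \geq k - \Delta$, which is positive for $|S_t|$ large enough. Second, I would write $\xi_1^t-\xi_1^*$ as a sum of three differences — one from the ratio $\lambda_p^t/\lambda_{r+1}^t$ versus $\lambda_p^*/\lambda_{r+1}^*$, and two from the noise term $cK\sqrt{\log p/|S_t|}\,(1/\lambda_{r+1}^t - 1/\lambda_{r+1}^*)$ and its residual — and bound each by combining the Weyl bounds with the denominator lower bound $\lambda_{r+1}^t\lambda_{r+1}^* \geq k(k-\Delta)$. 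A short computation of the form
\eqn{
\left|\frac{\lambda_p^t}{\lambda_{r+1}^t} - \frac{\lambda_p^*}{\lambda_{r+1}^*}\right|
= \left|\frac{\lambda_p^t\lambda_{r+1}^* - \lambda_p^*\lambda_{r+1}^t}{\lambda_{r+1}^t\lambda_{r+1}^*}\right|
\leq \frac{\Delta(\lambda_{r+1}^* + \lambda_p^*)}{k(k-\Delta)}
}
(after adding and subtracting $\lambda_p^*\lambda_{r+1}^*$ in the numerator) produces the leading $\Delta/(k(k-\Delta))$ behavior; the noise term contributes at higher order in $\Delta$ and is absorbed into the constants. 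Collecting terms and renaming absolute constants yields the claimed bound $\delta = c_1 K\sqrt{\log p/|S_t|}\,/\,\big(k(k-c_2K\sqrt{\log p/|S_t|})\big)$.

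The routine part is the algebra of the three differences; the only place that needs genuine care is making the eigenvalue comparison rigorous at the \emph{right} index — one must check that the top-$(r+1)$ eigenvalues of $\H_{S_t}$ and of $\H_{[n]}$ are matched up correctly by Weyl, and that the lower bound $k$ on the full Hessian propagates to $\lambda_{r+1}^t$ rather than only to some larger eigenvalue. The phrase ``sufficiently large $|S_t|$'' should be made explicit as the requirement $c_2 K\sqrt{\log p/|S_t|} < k$, i.e.\ $\Delta < k$, so that the denominator $k(k-\Delta)$ is positive; I would state this threshold and note that it is exactly what keeps $\xi_1^t$ itself well-defined (the sub-sampled Hessian stays invertible on its top-$(r+1)$ block). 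I expect no serious obstacle beyond being careful with which eigenvalue the bound $k$ controls.
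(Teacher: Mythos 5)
Your proposal is correct and follows essentially the same route as the paper: split $\xi_1^t-\xi_1^*$ into the ratio term and the noise term, bound the eigenvalue perturbations via Weyl's inequality combined with the matrix concentration bound $\|\H_{S_t}-\H_{[n]}\|_2\le cK\sqrt{\log(p)/|S_t|}$, and lower-bound the denominator by $k(k-\Delta)$ using the spectral lower bound on the full Hessian. Your added remarks — that quadraticity makes the Hessians constant in $\th$ (so concentration at a single point suffices), and that ``sufficiently large $|S_t|$'' means exactly $\Delta<k$ — are correct refinements of details the paper leaves implicit.
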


Theorem \ref{thm::coefficients} implies that,
when the sub-sampling size is sufficiently large, 
$\xi_1^t$ will concentrate around $\xi_1^*$. 
Generalizing the above theorem to non-quadratic functions is straightforward, 
in which case, one would get additional terms involving the difference $\| \hth^t - \th_*\|_2$.
In the case of scheme S2, if one uses fixed sub-samples, i.e., $\forall t$, $S_t = S$,
then the coefficient $\xi_1^t$ does not depend on $t$. 
The following corollary gives a sufficient condition for convergence. 
A detailed discussion on the number of iterations 
until convergence and further local convergence properties
can be found in Appendix \ref{sec::theoryExtra}.

\begin{corollary}\label{cor::convergenceCondition}
Assume that $\xi_1^t$ and $\xi_2^t$ are well-approximated by 
$\xi_1^*$ and $\xi_2^*$ with an error bound of $\delta$, i.e., $\xi_i^t \leq \xi_i^* +\delta$ for $i=1,2$, as in 
Theorem \ref{thm::coefficients}.
For the initial point $\hth^{0}$, a sufficient condition for convergence is 
\eqn{
\| \hth^{0} -\th_* \|_2 <\frac{1-\xi_1^*-\delta}{\xi_2^* +\delta}.
}
\end{corollary}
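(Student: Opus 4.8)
The plan is to treat the composite bound as a scalar recursion in the error $e_t \coloneqq \|\hth^t - \th_*\|_2$ and to show it contracts geometrically. First I would invoke the composite convergence estimate (Theorem~\ref{thm::uniformSampling} for scheme S1, or Theorem~\ref{thm::generalSampling} for scheme S2) and substitute the hypothesis $\xi_i^t \le \xi_i^* + \delta$ to obtain, on the relevant high-probability event,
$$
e_{t+1} \;\le\; (\xi_1^* + \delta)\, e_t + (\xi_2^* + \delta)\, e_t^2 \;=\; e_t\big(\xi_1^* + \delta + (\xi_2^* + \delta)\, e_t\big)
$$
for every $t \ge 0$. Writing $a \coloneqq \xi_1^* + \delta$ and $b \coloneqq \xi_2^* + \delta$ (the case $b=0$ being the trivial purely linear one), and setting $q \coloneqq a + b\, e_0$, the assumed initial condition $e_0 < (1-a)/b$ is exactly the statement $q < 1$; note it also forces $a<1$, i.e., the linear coefficient is itself contractive.

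Next I would run a short induction establishing $e_{t+1} \le q\, e_t$ (hence also $e_{t+1}\le e_t$) for all $t$. The base case is $e_1 \le e_0\,(a + b e_0) = q\, e_0$. For the inductive step, the monotonicity $e_t \le e_0$ accumulated from the previous steps gives $a + b\, e_t \le a + b\, e_0 = q$, so $e_{t+1} \le e_t\,(a + b\, e_t) \le q\, e_t$; since $q<1$ this also yields $e_{t+1}\le e_t$, closing the induction. Unrolling gives $e_t \le q^{\,t}\, e_0 \to 0$, i.e., geometric (indeed linear-rate) convergence of $\hth^t$ to $\th_*$, which is what "convergence" means here.

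The only delicate point is bookkeeping of the probabilistic statements: the per-iteration bound holds with probability $1-2/p$ (scheme S1) or $1-c_{\mathcal E}\,e^{-p}$ (scheme S2) at each step, so to assert convergence along the whole trajectory one either conditions on the intersection of these events, or — under scheme S2 with a fixed sub-sample $S_t \equiv S$, where the governing event is the same for all $t$ — no union bound is needed at all. I expect this to be routine rather than a real obstacle: the substance of the corollary is the elementary fact that $x \mapsto x(a+bx)$ maps $[0,e_0]$ into itself and contracts there whenever $a + b e_0 < 1$. Finally I would remark that the same argument already yields the explicit rate $e_t \le (a+b e_0)^t e_0$, which is the starting point for the sharper iteration-count and local-convergence discussion in Appendix~\ref{sec::theoryExtra}.
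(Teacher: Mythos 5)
Your proof is correct and takes essentially the same route as the paper, which never spells this argument out: the paper's Appendix~\ref{sec::theoryExtra} simply asserts that ``the condition on the initial point implies $\Delta_t\to 0$'' and your induction showing $e_{t+1}\le (\xi_1^*+\delta+(\xi_2^*+\delta)e_0)\,e_t$ is exactly the missing justification. Your remark about the union bound over iterations is a real (if minor) gap in the paper's own statement that you are right to flag.
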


\subsection{Choosing the algorithm parameters}\label{sec::parameters}

Algorithm parameters play a crucial role in
most optimization methods.
Based on the theoretical results from previous sections,
we discuss procedures to choose the optimal values for
the step size $\step_t$, sub-sample size $|S_t|$ and rank threshold.

\begin{itemize}
\item
\emph{Step size:} For the step size of \ALG at iteration $t$, we suggest
\eq{\label{eq::step1}
\step_t(\gamma) = \frac{2}{1+{\lambda^t_p}/{\lambda^t_{r+1}}+\gamma}.
}
where $\gamma=\O(\log(p)/|S_t|)$.
Note that $\step_t(0)$ is the upper bound in Theorems \ref{thm::uniformSampling} and \ref{thm::generalSampling} and it minimizes the first component
of $\xi_1^t$. 
The other terms in $\xi_1^t$ and $\xi^t_2$ linearly depend on $\step_t$.
To compensate for that, we shrink $\step_t(0)$ towards 1. 
Contrary to most algorithms, optimal step size of \ALG is larger than 1.
See Appendix \ref{sec::stepsize} for a rigorous derivation of Eq.~\ref{eq::step1}.

\item
\emph{Sample size:} By Theorem \ref{thm::uniformSampling}, 
a sub-sample of size $\O((K/\lambda^*_p)^2\log(p))$ should be sufficient
to obtain a small coefficient for the linear phase. 
Also note that sub-sample size $|S_t|$ scales quadratically with the condition number.

\item
\emph{Rank threshold:} For a full-Hessian with effective rank $R$ 
(trace divided by the largest eigenvalue),
it suffices to use $\O(R\log(p))$ samples
\cite{vershynin2010introduction,vershynin2012close}. 
Effective rank is upper bounded by the dimension $p$. 
Hence, one can use $p\log(p)$ samples to approximate the 
full-Hessian and choose a rank threshold which retains the important curvature information.
%
%
\end{itemize}

\section{Examples}\label{sec::examples}

\subsection{Generalized Linear Models}
Finding the maximum likelihood estimator in Generalized Linear Models (GLMs)
is equivalent to minimizing the negative log-likelihood $f(\th)$,
\eq{\label{eq::glmLikelihood}
\underset{\th }{\text{minimize}} \ f(\th)  = \frac{1}{n}\sum_{i=1}^n\left [  \Phi (\<x_i,\th\>)-y_i\<x_i , \th \> \right ],
}
where $\Phi$ is the \emph{cumulant generating function}, $y_i\in \reals$ denotes the observations, $x_i\in \reals^p$ denotes the rows of design matrix $\X \in \reals^{n \times p}$, and $\th\in\reals^{p}$ is the coefficient vector.
Note that this formulation only considers GLMs with canonical links.
Here, $\<x , \th \>$ denotes the inner product between 
the vectors $x$, $\th$.
The function $\Phi$ defines the type of GLM. Well known examples include
ordinary least squares (OLS) with $\Phi(z)=z^2$,
logistic regression (LR) with $\Phi(z)=\log(1+e^z)$,
and Poisson regression (PR) with $\Phi(z)=e^z$.

The gradient and the Hessian of the 
above function can be written as:
\eq{
\grad_\th f (\th) = \frac{1}{n}\sum_{i=1}^n\left [  \dphi (\<x_i,\th\>)x_i - y_ix_i \right ], \ \ \ \ 
\gradS_\th f (\th) = \frac{1}{n}\sum_{i=1}^n \ddphi (\<x_i,\th\>)x_i x_i^T.
}
We note that the Hessian of the GLM problem is always positive definite. This is because the second derivative of the cumulant generating function is simply the variance of the observations. Using the results from Section \ref{sec::theory}, we perform a
convergence analysis of our algorithm on a GLM problem.

\begin{corollary}\label{cor::glm}
Let $S_t\subset [n]$ be a uniform sub-sample, and $\C$ be a convex parameter set.
Assume that the second derivative of the cumulant generating function, $\ddphi$ is
 bounded by $1$, and it is Lipschitz continuous with 
Lipschitz constant $L$. Further, assume that the covariates 
are contained in a ball of radius $\sqrt{R_x}$, 
i.e. $\max_{i \in [n]} \| x_i \|_2 \leq \sqrt{R_x}.$
Then,
for $\hth^t$ given by \ALG with constant step size $\step_t =1$ at iteration $t$,
with probability at least $1-2/p$, we have
\eqn{
\| \hth^{t+1} -\th_* \|_2 \leq&\ 
\xi^t_1 \|\hth^{t} -\th_*\|_2
+\xi^t_2 \|\hth^{t} -\th_*\|^2_2 ,
}
for constants $\xi^t_1$ and $\xi^t_2$ defined as
\eqn{
\xi^t_1 = &
1- \frac{\lambda^t_i}{\lambda^t_{r+1}}
+  \frac{cR_x}
{\lambda^t_{r+1}} \sqrt{\frac{\log(p)}{|S_t|}}, \ \ \ \ \ \ \ \ \ \ \ \ \ \ \ \ \ \ \ \ 
 \xi^t_2 = &
\frac{L R_x^{3/2} }
{2\lambda^t_{r+1}}
,
}
where $c>0$ is an absolute constant and $\lambda^t_i$ is the $i$th eigenvalue of \ $\H_{S_t}(\hth^t)$.
\end{corollary}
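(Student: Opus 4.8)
The plan is to obtain Corollary~\ref{cor::glm} as a direct specialization of Theorem~\ref{thm::uniformSampling}. Since $S_t$ is a uniform sub-sample, $\C$ is convex and (as in Lemma~\ref{lem::uniformSampling}) $\th_*\in\C$, the only substantive work is to check that the GLM objective of Eq.~(\ref{eq::glmLikelihood}) satisfies Assumptions~\ref{as::Lipschitz} and~\ref{as::bound} with explicit constants, and that the constant choice $\step_t=1$ is admissible under the step-size condition~(\ref{eq::stepCond}). Everything else, including the probability $1-2/p$ and the absolute constant $c$, is then inherited verbatim from Theorem~\ref{thm::uniformSampling}.

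First I would verify the two assumptions. For Assumption~\ref{as::bound}: each per-sample Hessian is $\gradSth f_i(\th)=\ddphi(\<x_i,\th\>)\,x_ix_i^T$, so $\|\gradSth f_i(\th)\|_2=\ddphi(\<x_i,\th\>)\,\|x_i\|_2^2\le 1\cdot R_x$ using the hypotheses $\ddphi\le 1$ and $\|x_i\|_2^2\le R_x$; hence $K=R_x$. For Assumption~\ref{as::Lipschitz}: for any $S\subset[n]$ and $\th,\th'\in\C$, the triangle inequality gives $\|\H_S(\th)-\H_S(\th')\|_2\le \frac{1}{|S|}\sum_{i\in S}|\ddphi(\<x_i,\th\>)-\ddphi(\<x_i,\th'\>)|\,\|x_i\|_2^2$, and the $L$-Lipschitz property of $\ddphi$ combined with Cauchy--Schwarz gives $|\ddphi(\<x_i,\th\>)-\ddphi(\<x_i,\th'\>)|\le L|\<x_i,\th-\th'\>|\le L\sqrt{R_x}\,\|\th-\th'\|_2$. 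Multiplying by $\|x_i\|_2^2\le R_x$ and averaging yields $\|\H_S(\th)-\H_S(\th')\|_2\le L R_x^{3/2}\|\th-\th'\|_2$, a bound that does not depend on $S$; thus Assumption~\ref{as::Lipschitz} holds with $M_{|S|}=L R_x^{3/2}$ for every $S$, and in particular $M_n=L R_x^{3/2}$.

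Finally, because $\H_{S_t}(\hth^t)$ is a nonnegative combination of rank-one matrices it is positive semidefinite, so its ordered eigenvalues satisfy $0\le\lambda_p^t\le\lambda_{r+1}^t$ and hence $2/(1+\lambda_p^t/\lambda_{r+1}^t)\ge 1$, which shows $\step_t=1$ is a feasible choice in~(\ref{eq::stepCond}). Substituting $\step_t=1$, $K=R_x$ and $M_n=L R_x^{3/2}$ into the coefficients $\xi_1^t,\xi_2^t$ of Theorem~\ref{thm::uniformSampling} produces exactly the expressions in the statement. I do not expect a genuine obstacle here beyond bookkeeping of constants; the one point that must be handled with care is that the Lipschitz constant for $\H_{S_t}$ has to be independent of the random sub-sample $S_t$, which is automatic above since the covariate bound $R_x$ controls all indices $i\in[n]$ simultaneously.
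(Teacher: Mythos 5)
Your proposal is correct and follows essentially the same route as the paper: verify Assumption~\ref{as::bound} with $K=R_x$ and Assumption~\ref{as::Lipschitz} with $M_{|S|}=LR_x^{3/2}$, then invoke Theorem~\ref{thm::uniformSampling}. Your additional check that $\step_t=1$ satisfies the step-size condition (via $\lambda_p^t\le\lambda_{r+1}^t$) is a point the paper only asserts in passing, so including it is a small improvement in completeness.
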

Proof of Corollary \ref{cor::glm} can be found in Appendix \ref{sec::proofs}. Note that the bound on the second derivative is quite loose for Poisson regression due to exponentially fast growing cumulant generating function.
%
%
%

\subsection{Support Vector Machines}

A linear Support Vector Machine (SVM) provides a \emph{separating hyperplane} which maximizes the \emph{margin}, i.e., 
the distance between the hyperplane and the support vectors. 
Although the vast majority of the
literature focuses on the dual problem 
\cite{vapnik1998statistical,scholkopf2002learning}, 
SVMs can be trained using the primal as well. 
Since the dual problem does not scale well with the number of data points (some approaches get $\O(n^3)$ complexity, 
\cite{woodsend2011exploiting}), 
the primal might be better-suited for optimization of 
linear SVMs \cite{keerthi2005modified,chapelle2007training}. 

The primal problem for the linear SVM can be written as
\eq{\label{eq::svmPrimal}
\underset{\th \in \C}{\text{minimize}} \ f(\th) = \frac{1}{2}\|\th\|_2^2 
+\frac{1}{2} C \sum_{i =1}^n \ell(y_i,\<\th,x_i\>)%
}
where $(y_i,x_i)$ denote the data samples, $\th$ defines the separating hyperplane, $C>0$ and $\ell$ could be any loss function.
The most commonly used loss functions include \emph{Hinge-p loss}, 
\emph{Huber loss} and their smoothed versions \cite{chapelle2007training}. 
Smoothing or approximating such losses with more stable functions is sometimes crucial in optimization. In the case of \ALG which requires the loss function to be twice differentiable (almost everywhere), we suggest either smoothed Huber loss, 
i.e.,
\eqn{
\ell(y,\<\th,x\>) =
\begin{cases}
    0, & \text{if $y\<\th,x\> >3/2$},\\
     \frac{(3/2-y\<\th,x\>)^2}{2}, & \text{if $|1-y\<\th,x\>|\leq1/2$},\\
    1-y\<\th,x\>, & \text{otherwise}.
  \end{cases}
}
or Hinge-2 loss, i.e., $$\ell(y,\<\th,x\>) = \max \left\{  0, 1-y \< \th, x\>\right\}^2.$$
For the sake of simplicity, we will focus on Hinge-2 loss.
Denote by $SV_t$, the set of indices of all the support vectors at iteration $t$, i.e.,
\eqn{
SV_t = \{i : y_i \< \th^t,x_i\> <1  \}.
}
When the loss is set to be the Hinge-2 loss, the Hessian of the SVM problem, 
normalized by the number of support vectors, can be written as

\eqn{
\gradS_\th f(\th) =\frac{1}{|SV_t|} \Big\{\I +  C \sum_{i \in SV_t} x_ix_i^T\Big\}.
}
When $|SV_t|$ is large, 
the problem falls into our setup and can be solved efficiently using \ALG\!.
Note that unlike the GLM setting, Lipschitz condition of our Theorems do not apply here. However, we empirically demonstrate that \ALG works regardless of such assumptions. 

\section{Experiments}\label{sec::experiments}

\begin{figure*}[t]
\centering
 \includegraphics[width=6.9in]{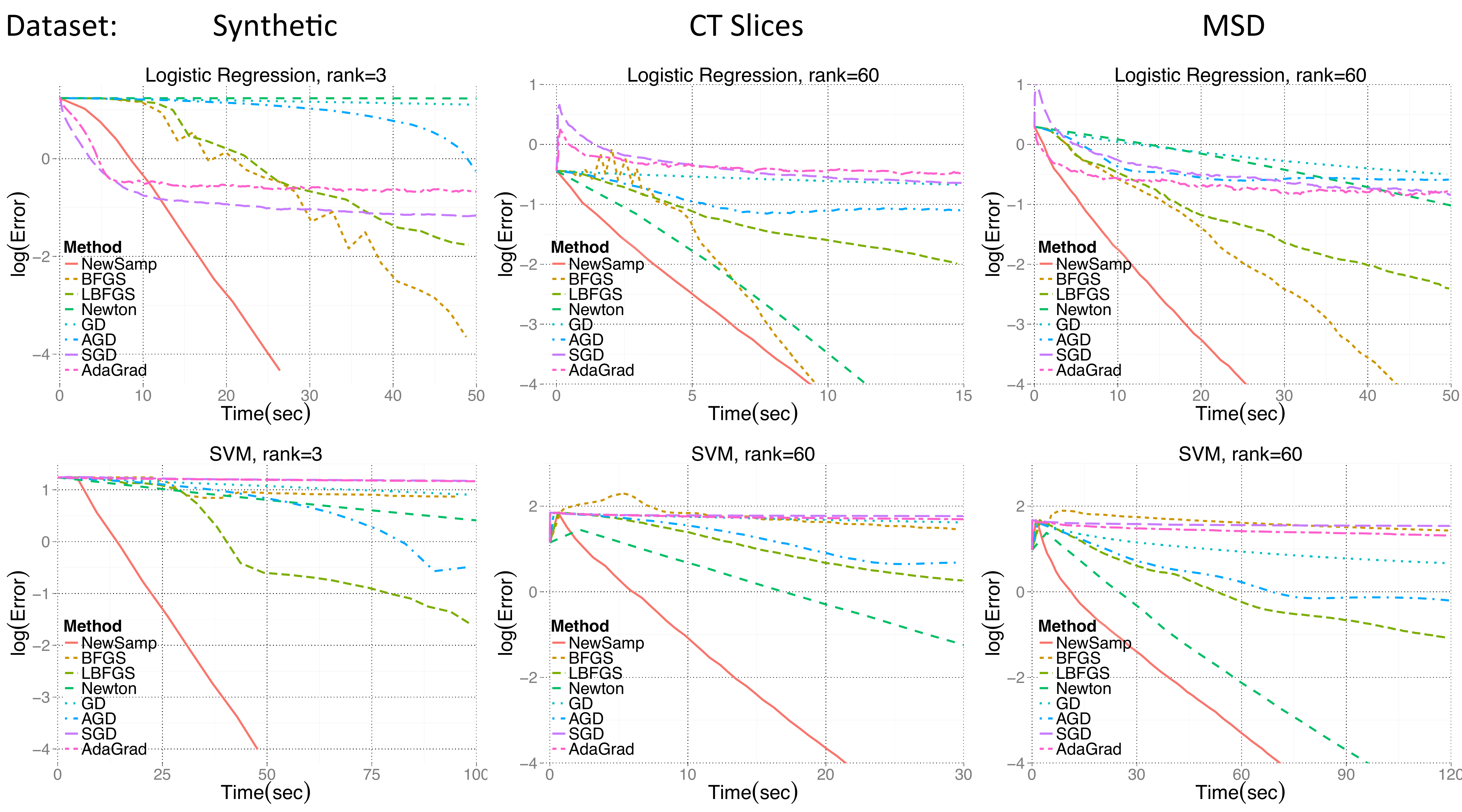}
  \caption{\label{fig::plot}
Performance of various optimization methods on different datasets.
\ALG is represented with red color .
}
\end{figure*}
In this section, we validate the performance of \ALG through 
extensive numerical studies. 
We experimented on two optimization problems, namely,
\emph{Logistic Regression} (LR) and \emph{Support Vector Machines} (SVM) with quadratic loss.
LR minimizes Eq.~\ref{eq::glmLikelihood} for the logistic function, whereas
SVM minimizes Eq.~\ref{eq::svmPrimal} for the Hinge-2 loss.

In the following, we briefly describe the algorithms that are used in the experiments:
\begin{enumerate}
\item \emph{Gradient Descent} (GD), at each iteration, takes a step proportional to negative of the full gradient evaluated at the current iterate. Under certain regularity conditions, GD exhibits a linear convergence rate.
\item \emph{Accelerated Gradient Descent} (AGD) is proposed by Nesterov \cite{nesterov1983method}, which improves over the gradient descent by using a momentum term. 
Performance of AGD strongly depends of the smoothness of the function $f$ and decreasing step size adjustments may be necessary for convergence.

\item \emph{Newton's Method} (NM) achieves a quadratic convergence rate by utilizing the inverse Hessian evaluated at the current iterate. However, the computation of Hessian makes it impractical for large-scale datasets.
\item \emph{Broyden-Fletcher-Goldfarb-Shanno} (BFGS) is the most popular and stable Quasi-Newton method. Scaling matrix is formed by accumulating the information from iterates and gradients, satisfying \emph{Quasi-Newton rule}. 
The convergence rate is locally super-linear and per-iteration cost is comparable to first order methods.
\item \emph{Limited Memory BFGS} (L-BFGS) is a variant of BFGS, which uses only the recent iterates and gradients to form the approximate Hessian, providing significant improvement in terms of memory usage. 
\item \emph{Stochastic Gradient Descent} (SGD) is a simplified version of GD where, at each iteration, instead of the full gradient, a randomly selected gradient is used. 
Per-iteration cost is independent of $n$, 
yet the convergence rate is significantly slower compared to batch algorithms. 
We follow the guidelines of \cite{bottou2010large,senior2013empirical} for the step size,
, i.e.,
$$
\gamma_t = \frac{\gamma}{1+t/c},
$$
 for constants $\gamma,c>0$.
\item \emph{Adaptive Gradient Scaling} (AdaGrad) is an online algorithm which uses an adaptive learning rate based on the previous gradients. AdaGrad significantly improves the performance and stability of SGD \cite{Duchi11}. 
This is achieved by scaling each entry of gradient differently.
, i.e., at iteration step $t$, step size for the $j$-th coordinate is
 \eqn{
 (\gamma_t)_j = \frac{\gamma}{
 \sqrt{
 \delta + \sum_{\tau=1}^t (\grad_\th f(\hth^t))_j}
 },
 }
for constants $\delta,\gamma>0$.
\end{enumerate}
For each of the batch algorithms, we used constant step size, and for all the algorithms, 
we choose the step size that provides the fastest convergence. 
For the stochastic algorithms, we optimized over the parameters that define the step size. 
Parameters of \ALG are selected following the guidelines described in Section \ref{sec::parameters}.

We experimented over various datasets that are given in Table \ref{tab::realdatasets}. 
The real datasets are downloaded from the UCI repository \cite{lichman2013}.
Each dataset consists of a design matrix $\X \in \reals^{n \times p}$ and 
the corresponding observations (classes) $y \in \reals^n$. Synthetic data is 
generated through a multivariate Gaussian distribution with a randomly generated covariance matrix.
As a methodological choice, we selected moderate values of $p$, for which Newton's Method can still be implemented, and nevertheless we can demonstrate an improvement. For larger values of $p$, comparison is even more favorable to our approach.

The effects of sub-sampling size $|S_t|$ and rank threshold are demonstrated in Figure \ref{fig::convAndCoeff}.
A thorough comparison of the aforementioned optimization techniques is presented 
in Figure \ref{fig::plot}. 
In the case of LR, we observe that stochastic algorithms enjoy fast convergence at start, 
but slows down later as they get close to the true minimizer. 
The algorithm that comes close to \ALG in terms of performance is BFGS. 
In the case of SVM, Newton's method is the closest algorithm to NewSamp, yet in all scenarios, \ALG
outperforms its competitors.
Note that the global convergence of BFGS is not better than that of GD \cite{nesterov2004introductory}. The condition for super-linear rate is $\sum_t \|\theta^t-\theta_*\|_2 < \infty$ for which, an initial point close to the optimum is required \cite{dennis1977quasi}. 
This condition can be rarely satisfied in practice, which also affects the performance of the other second order methods.  
For \ALG, even though the rank thresholding provides a certain level of robustness, we observed that the choice of a good starting point is still an important factor.
Details about Figure \ref{fig::plot} can be found in Table \ref{tab::details} in Appendix. For additional experiments and a detailed discussion,
see Appendix \ref{sec::furtherExperiments}.

\begin{table}[H]
\centering
\begin{tabular}{|l||l|l|l|r|}
\hline
Dataset    & $n$ & $p$ & $r$&  Reference \\
\hline
CT slices		&53500	&386	& 60&\cite{graf20112d}\\
Covertype 	&581012 	&54	 & 20&\cite{blackard1999comparative}\\
MSD  	&515345	&90	 & 60&\cite{bertin2011}\\
Synthetic  	&500000	&300	& 3& -\\
\hline
\end{tabular}
\caption{\label{tab::realdatasets}
Datasets used in the experiments.
}
\end{table}

\section{Conclusion}\label{sec::discussion}

In this paper, we proposed a sub-sampling based 
second order method utilizing low-rank Hessian estimation. 
The proposed method has the target regime $n\gg p$ and 
has $\O \left(np+|S|p^2\right)$ complexity per-iteration. 
We showed that the
convergence rate of \ALG is composite for two widely used sub-sampling schemes, i.e., 
starts as quadratic convergence and transforms to 
linear convergence near the optimum. 
Convergence behavior under other sub-sampling schemes is an interesting line of research.
Numerical experiments on both real and synthetic datasets 
demonstrate the performance of the proposed algorithm 
which we compared to the classical optimization methods.

\section*{Acknowledgments}
We are grateful to Mohsen Bayati for stimulating conversations on the topic of this work.
We would like to thank Robert M. Gower for carefully reading this manuscript and providing valuable feedback.
A.M. was partially supported by NSF grants CCF-1319979 and DMS-1106627 and the AFOSR
grant FA9550-13-1-0036. 


\bibliographystyle{amsalpha}
\bibliography{bib}

\newpage
\appendix

\section{Proofs of Theorems and Lemmas}\label{sec::proofs}

\begin{proof}[Proof of Lemma \ref{lem::uniformSampling}]
We write,
\eqn{
 \hth^{t} -\th_*- \step_t \Q^t \grad_\th f(\hth^{t})
=& \ \hth^{t} -\th_*- \step_t \Q^t \int_0^1\gradS_\th f(\th_* + \tau(\hth^{t}-\th_*))(\hth^{t} -\th_*)\ d\tau,  \\
=& \left( I - \step_t \Q^t \int_0^1\gradS_\th f(\th_* + \tau(\hth^{t}-\th_*))d\tau\right) (\hth^{t} -\th_*)\,.
}
Since the projection $\proj_{\C}$ in step 2 of \ALG can only 
decrease the $\ell_2$ distance, we obtain
\eqn{
\| \hth^{t+1} -\th_* \|_2 \leq&
 \left\| I - \step_t \Q^t 
 \int_0^1\gradS_\th f (\th_* + \tau(\hth^{t}-\th_*))d\tau\right\|_2 \|\hth^{t} -\th_*\|_2 .
}
Note that the first term on the right hand side governs 
the convergence behavior of the algorithm.

Next, for an index set $S \subset [n]$, define the matrix $\H_S(\th)$ as
\eqn{
\H_S(\th) = \frac{1}{|S|}\sum_{i\in S}\H_i (\th)
}
where $|S|$ denotes the size of the set. Denote the integral in the above equation by $\widetilde{\H}$,
that is,
\eqn{
\widetilde{\H} =  \int_0^1\gradS_\th f (\th_* + \tau(\hth^{t}-\th_*))d\tau .
}

By the triangle inequality, the governing term that determines 
the convergence rate can be bounded as
\eq{\label{eq::mainBound}
\left\| I -  \step_t\Q^t \widetilde{\H}\right\|_2&\leq
 \left\| I - \step_t\Q^t\H_S(\hth^t)\right \|_2 \\ 
 + &
 \step_t\left\|\Q^t\right\|_2 
 \Big\{  
 \left\|\H_S(\hth^t)- \H_{[n]}(\hth^t) \right\|_2 
+ \left\|\H_{[n]}(\hth^t)- \widetilde{\H} \right\|_2 
 \Big\},\nonumber
}
which holds, regardless of the choice of $\Q^t$.

In the following, we will use some matrix concentration results to bound the right hand side of Eq.~(\ref{eq::mainBound}). The result for sampling with replacement can be obtained by matrix Hoeffding's inequality given in \cite{tropp2012user}. Note that this explicitly assumes that the samples are independent.
For the concentration bounds under sampling without replacement (see i.e. \cite{gross2010note,gross2011recovering,mackey2014matrix}),
we will use the Operator-Bernstein inequality given in \cite{gross2010note} which is provided in Section \ref{sec::auxiliary} as Lemma \ref{lem::op-Bernstein} for convenience.

Using any indexing over the elements of sub-sample $S$, we denote the each element in $S$ by $s_i$, i.e.,
$$S=\{s_1, s_2, ... ,s_{|S|} \}.$$ 
For $\th \in \C$, we define the centered Hessians, $\W_i(\th)$ as
\eqn{
\W_i(\th) = \H_{s_i}(\th) -\E\left [ \H_{s_i}(\th) \right],
}
where the $\E\left [ \H_{s_i}(\th) \right]$ is just the full Hessian at $\th$.

By the Assumption (\ref{as::bound}), we have
\eq{\label{eq::boundsHoeff}
&\max_{i \leq n}\|\H_i(\th)\|_2 = \left\| \gradSth f_i (\th)\right\|_2 \leq K, \\
&\max_{i \leq n}\|\W_i\|_2 \leq 2K \coloneqq \gamma,\ \ \ \ \ \ 
\max_{i \leq n}\left\|  \W_i^2\right\|_2 \leq 4K^2\coloneqq \sigma^2.\nonumber
}
Next, we apply the matrix Bernstein's inequality given in Lemma \ref{lem::op-Bernstein}. For $\e \leq 4K$, and $\th \in \C$,

\eq{\label{eq::hoeffding}
\P \left(
\left\|\H_S(\th)- \H_{[n]}(\th) \right\|_2 > \e
\right)
\leq
2p \exp \left \{ 
-\frac{\e^2 |S|}{16 K^2}
\right\}.
}

Therefore, to obtain a convergence rate of $\O(1/p)$, we let 
$$\e = 
C\sqrt{\frac{ \log(p)}{|S|}},
$$
where $C = 6K$ is sufficient. We also note that the condition on $\e$ is trivially satisfied by 
our choice of $\e$ in the target regime.

For the last term, we may write,

\begin{align*}
\left\|\H_{[n]}(\hat{\theta}^t) - 
\widetilde{\H}
 \right\|_2
= &
\left\|\H_{[n]}(\hat{\theta}^t) - 
\int_0^1\gradS_\theta f (\theta_* + \tau(\hat{\theta}^{t}-\theta_*))d\tau
 \right\|_2, \\ \leq &
 \int_0^1\left\|\H_{[n]}(\hat{\theta}^t) - 
\gradS_\theta f (\theta_* + \tau(\hat{\theta}^{t}-\theta_*))
 \right\|_2 d\tau, \\
 \leq &\int_0^1 M_n (1-\tau)\|\hat{\theta}^t - \theta_* \|_2 d\tau, \\
  = &\frac{M_n}{2} \|\hat{\theta}^t - \theta_* \|_2 .
\end{align*}

First inequality follows from the fact that norm of an integral is less than or equal to the integral of the norm.
Second inequality follows from the Lipschitz property.

Combining the above results, we obtain the following 
for the governing term in Eq.(\ref{eq::mainBound}):
For some absolute constants $c,C>0$, 
with probability at least $1-2/p$, 
we have
\eqn{
\left\| I -  \step_t\Q^t \H_{[n]}(\tth^t)\right\|_2&\leq
 \left\| I - \step_t\Q^t\H_S(\hth^t)\right \|_2  
 + &
 \step_t
 \left\|\Q^t\right\|_2 
 \Big\{  
6K\sqrt{\frac{\log(p)}{|S|}}
+ \frac{M_{n}}{2}\  \| \hth^t - \th_* \|_2
 \Big\}.\nonumber
}
Hence, the proof is completed.
\end{proof}

\begin{proof}[Proof of Theorem \ref{thm::uniformSampling}]
Using the definition of $\Q^t$ in \ALG\!\!,
we immediately obtain that
\eq{\label{eq::governingTerm}
\left\| I - \step_t\Q^t\H_{S_t}(\hth^t)\right \|_2  =
\max_{i > r} \left\{ \left| 1-\step_t \frac{\lambda^t_i}{\lambda^t_{r+1}}\right | \right\},
}
and that $
 \left\|\Q^t\right\|_2 = { 1}/
{\lambda^t_{r+1}}$.
Then the proof follows from Lemma \ref{lem::uniformSampling} 
and by the assumption on the step size.
\end{proof}

\begin{lemma}\label{lem::generalSampling}
Assume that the parameter set $\C$ is convex and 
$S_t\subset [n]$ is based on sub-sampling scheme S2.
Further, let the Assumptions \ref{as::Lipschitz}, \ref{as::bound} and 
\ref{as::iidFunctions} hold, almost surely.
Then, for some absolute constants $c,C>0$, with probability at least $1-e^{-p}$,
the updates of the form stated in Eq.~(\ref{eq::update}) satisfy
\eqn{
\| \hth^{t+1} -\th_* \|_2 \leq&\ 
\xi_1^t \|\hth^{t} -\th_*\|_2
+\xi_2^t \|\hth^{t} -\th_*\|^2_2 ,
}
for coefficients $\xi_1^t,\xi_2^t$ defined as
\eqn{
\xi_1^t = &
\left\| I - \step_t\Q^t\H_{S_t}(\hth^t)\right \|_2  
+ \step_t 
 \left\|\Q^t\right\|_2 \times 
cK \sqrt{
  \frac{p}{|S_t|} 
\log\left (\frac{  \diam^2 \left ( M_n + M_{|S_t|}\right)^2 |S_t|}{K^2}\right)}
, \\
\xi_2^t = & 
  \step_t
  \frac{M_n}{2}
 \left\|\Q^t\right\|_2 .
}

\end{lemma}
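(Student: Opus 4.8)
\textbf{Proof proposal for Lemma \ref{lem::generalSampling}.}

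The plan is to follow the same skeleton as the proof of Lemma \ref{lem::uniformSampling}, replacing only the concentration step. First I would repeat verbatim the deterministic reduction: write $\hth^{t}-\th_*-\step_t\Q^t\grad_\th f(\hth^t) = \bigl(I - \step_t\Q^t\widetilde{\H}\bigr)(\hth^t-\th_*)$ with $\widetilde{\H} = \int_0^1 \gradS_\th f(\th_*+\tau(\hth^t-\th_*))\,d\tau$, use that the Euclidean projection $\proj_\C$ is non-expansive to get $\|\hth^{t+1}-\th_*\|_2 \le \|I-\step_t\Q^t\widetilde{\H}\|_2\|\hth^t-\th_*\|_2$, and then apply the triangle inequality exactly as in Eq.~(\ref{eq::mainBound}):
\eqn{
\left\| I -  \step_t\Q^t \widetilde{\H}\right\|_2 \leq
 \left\| I - \step_t\Q^t\H_{S_t}(\hth^t)\right \|_2
 + \step_t\left\|\Q^t\right\|_2
 \Big\{
 \left\|\H_{S_t}(\hth^t)- \H_{[n]}(\hth^t) \right\|_2
+ \left\|\H_{[n]}(\hth^t)- \widetilde{\H} \right\|_2
 \Big\}.
}
The last term is handled identically to Lemma \ref{lem::uniformSampling} via Assumption \ref{as::Lipschitz}, giving the $\frac{M_n}{2}\|\hth^t-\th_*\|_2$ contribution that becomes $\xi_2^t$. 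So the entire content is in bounding the middle term $\|\H_{S_t}(\hth^t)-\H_{[n]}(\hth^t)\|_2$.

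The difficulty here, and the reason scheme S2 needs a different argument, is that $\hth^t$ is itself a random point produced by the algorithm, possibly correlated with $S_t$ through the shared data $z_1,\dots,z_n$; we cannot condition on $S_t$ and treat $\H_{S_t}$ as a fresh average of i.i.d.\ matrices. The standard fix is a \emph{uniform} (over $\th\in\C$) deviation bound: I would show that, with probability at least $1-e^{-p}$,
\eqn{
\sup_{\th\in\C}\ \left\|\H_{S_t}(\th)-\H_{[n]}(\th)\right\|_2 \ \le\ cK\sqrt{\frac{p}{|S_t|}\log\!\left(\frac{\diam^2(M_n+M_{|S_t|})^2|S_t|}{K^2}\right)},
}
and then simply evaluate this at $\th=\hth^t$. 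To prove the uniform bound I would (i) fix $\th$ and apply a matrix Bernstein / Hoeffding inequality — exactly Eq.~(\ref{eq::hoeffding}) with its variance proxy $4K^2$ from Assumption \ref{as::bound} — to control $\|\H_{S_t}(\th)-\H_{[n]}(\th)\|_2$; (ii) build a $\rho$-net $\mathcal{N}_\rho$ of the convex set $\C$, whose cardinality is at most $(1+\diam/\rho)^p$, take a union bound over the net (this is where the $p$ in the exponent and the $\log(\cdots)$ factor come from), absorbing the $\log|\mathcal{N}_\rho| \asymp p\log(\diam/\rho)$ into the exponent; (iii) pass from the net to all of $\C$ using the Lipschitz continuity of $\th\mapsto\H_{S_t}(\th)-\H_{[n]}(\th)$, whose modulus is at most $M_{|S_t|}+M_n$ by Assumption \ref{as::Lipschitz} (triangle inequality on the two Lipschitz bounds); (iv) optimize the net radius $\rho$ — taking $\rho$ of order $K/\big((M_n+M_{|S_t|})\sqrt{|S_t|}\big)$ balances the discretization error against the Bernstein term and produces exactly the stated logarithm. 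This chaining-by-net argument over $\C$ is the main obstacle and the only genuinely new ingredient relative to Lemma \ref{lem::uniformSampling}; Assumption \ref{as::iidFunctions} is what licenses treating $\H_{[n]}(\th)$ as the population Hessian (the common mean) so that the per-point fluctuations are i.i.d.\ and the Bernstein bound applies.

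Finally I would collect terms: plugging the uniform bound and the Lipschitz bound on $\|\H_{[n]}(\hth^t)-\widetilde{\H}\|_2$ back into the triangle inequality gives
\eqn{
\left\| I -  \step_t\Q^t \widetilde{\H}\right\|_2 \leq \left\| I - \step_t\Q^t\H_{S_t}(\hth^t)\right \|_2 + \step_t\|\Q^t\|_2\, cK\sqrt{\frac{p}{|S_t|}\log\!\left(\frac{\diam^2(M_n+M_{|S_t|})^2|S_t|}{K^2}\right)} + \step_t\|\Q^t\|_2\frac{M_n}{2}\|\hth^t-\th_*\|_2,
}
which, multiplied by $\|\hth^t-\th_*\|_2$, is precisely the asserted composite bound with the stated $\xi_1^t$ and $\xi_2^t$. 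The probability $1-e^{-p}$ is what the net union bound delivers (as opposed to the $1-2/p$ of the pointwise argument); Theorem \ref{thm::generalSampling} then follows by substituting the \ALG\ form of $\Q^t$ exactly as Theorem \ref{thm::uniformSampling} follows from Lemma \ref{lem::uniformSampling}, with the extra factor $1/\P(\mathcal{E})$ coming from conditioning on $\mathcal{E}=\{\th_*\in\C\}$.
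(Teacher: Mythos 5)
Your proposal is correct and follows essentially the same route as the paper's proof: the same deterministic decomposition from Lemma \ref{lem::uniformSampling}, a uniform bound on $\sup_{\th\in\C}\|\H_{S_t}(\th)-\H_{[n]}(\th)\|_2$ via an $\epsilon$-net over $\C$, matrix Hoeffding plus a union bound over the net, Lipschitz transfer with modulus $M_n+M_{|S_t|}$, and optimization of the net radius to produce the stated logarithm. The only (immaterial) differences are your choice of covering-number bound $(1+\diam/\rho)^p$ versus the paper's Lemma \ref{lem::sphere}, and your net radius omitting a $\sqrt{p\log(\cdot)}$ factor relative to the paper's optimized $\Delta$, which only perturbs constants inside the logarithm.
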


\begin{proof}[Proof of Lemma \ref{lem::generalSampling}]
The first part of the proof is the same as 
Lemma \ref{lem::uniformSampling}. 
We carry our analysis from Eq.(\ref{eq::mainBound}). 
Note that in this general set-up, the iterates are random variables 
that depend on the random functions. 
Therefore, we use a uniform bound for the right hand side in 
Eq.(\ref{eq::mainBound}). That is,

\eqn{
\left\| I -  \step_t\Q^t \widetilde{\H}\right\|_2&\leq
 \left\| I - \step_t\Q^t\H_S(\hth^t)\right \|_2 \\ 
 + &
 \step_t\left\|\Q^t\right\|_2 
 \Big\{  
\sup_{\th \in \C} \left\|\H_S(\th)- \H_{[n]}(\th) \right\|_2 
+ \frac{M_n}{2} \|\hat{\theta}^t - \theta_* \|_2
 \Big\}.\nonumber
}

By the Assumption \ref{as::Lipschitz}, given $\th, \th' \in \C$ 
such that $\|\th - \th'\|_2 \leq \Delta$, we have,
\eqn{
  \left\|\H_S(\th)- \H_{[n]}(\th) \right\|_2  \leq &  \left\|\H_S(\th')- \H_{[n]}(\th') \right\|_2 
  +\left ( M_n + M_{|S|}\right) \|\th-\th' \|_2 \\ 
  \leq &  \left\|\H_S(\th')- \H_{[n]}(\th') \right\|_2 
  +\left ( M_n + M_{|S|}\right) \Delta.
}

Next, we will use a covering net argument to obtain a bound on the matrix empirical process. Note that similar bounds on the matrix forms can be obtained through other approaches like \emph{chaining} as well \cite{dicker2015flexible}. Let $\T_\Delta$ be a $\Delta$-net over the convex set $\C$. 
By the above inequality, we obtain

\eq{\label{eq::netIneq}
 \sup_{\th \in \C} \left\|\H_S(\th)- \H_{[n]}(\th) \right\|_2  
  \leq   \max_{\th' \in \T_\Delta}\left\|\H_S(\th')- \H_{[n]}(\th') \right\|_2 
  +\left ( M_n + M_{|S|}\right) \Delta.
}

Now we will argue that the right hand side is small 
with high probability using the matrix Hoeffding's inequality 
from \cite{tropp2012user}. 
By the union bound over $\T_\Delta$, we have

\eqn{
\P \left(
\max_{\th' \in \T_\Delta}\left\|\H_S(\th')- \H_{[n]}(\th') \right\|_2 > \e
\right)
\leq &
|\T_\Delta|\ 
\P \left(
\left\|\H_S(\th')- \H_{[n]}(\th') \right\|_2 > \e
\right).
}

For the first term on the right hand side, by Lemma \ref{lem::sphere}, we write:
\eqn{
|\T_\Delta| \leq \left ( \frac{\diam}{2\Delta/\sqrt{p}}\right)^p.
}

As before,  let $S = \{ s_1,s_2, ... ,s_{|S|}\}$, that is, $s_i$ 
denote the different indices in $S$.
For any $\th \in \C$ and $i=1,2,...,n$,
we define the centered Hessians $\W_i(\th)$ as
\eqn{
\W_i(\th) =\H_{s_i}(\th) - \H_{[n]}(\th).
}

By the Assumption (\ref{as::bound}), we have 
the same bounds as in Eq.(\ref{eq::boundsHoeff}).
Hence, for $\e >0$ and $\th \in \C$, 
by the matrix Hoeffding's inequality \cite{tropp2012user},
\eqn{
\P \left(
\left\|\H_S(\th)- \H_{[n]}(\th) \right\|_2 > \e
\right) \leq \ 
 2p \exp \left\{
-\frac{|S|\e^2}{32 K^2}
\right\}.
}
We would like to obtain an exponential decay with 
a rate of at least $\O(p)$. Hence, we require,

\eqn{
 p \log\left(\frac{\diam\sqrt{p}}{2\Delta}\right)+\log(2p) + p \leq & \ \ 
 p \log\left(\frac{4\diam\sqrt{p}}{\Delta}\right), \\ \leq&
\frac{|S|\e^2}{32 K^2},
}
which gives the optimal value of $\e$ as
\eqn{
\e \geq \sqrt{
\frac{32K^2p}{|S|} \log\left(\frac{4\diam\sqrt{p}}{\Delta}\right).
}
}
Therefore, we conclude that for the above choice of $\e$, 
with probability at least $1-e^{-p}$, we have 
\eqn{
\max_{\th \in \T_\Delta}\left\|\H_S(\th)- \H_{[n]}(\th) \right\|_2 < \sqrt{
\frac{32K^2p}{|S|} \log\left(\frac{4\diam\sqrt{p}}{\Delta}\right)
}.
}

Applying this result to the inequality in Eq.(\ref{eq::netIneq}), 
we obtain that with probability at least $1-e^{-p}$,

\eqn{
 \sup_{\th \in \C} \left\|\H_S(\th)- \H_{[n]}(\th) \right\|_2  
  \leq   \sqrt{
\frac{32K^2p}{|S|} \log\left(\frac{4\diam\sqrt{p}}{\Delta}\right)
}
  +\left ( M_n + M_{|S|}\right) \Delta.
}

The right hand side of the above inequality depends on the net 
covering diameter $\Delta$. We optimize over $\Delta$ using 
Lemma \ref{lem::epsilon} which provides for

\eqn{
\Delta = 4\sqrt{
\frac{K^2p}{\left ( M_n + M_{|S|}\right)^2|S|} 
\log\left ( \frac{ \diam^2 \left ( M_n + M_{|S|}\right)^2 |S|}{K^2}\right)
},
}

we obtain that with probability at least $1-e^{-p}$,

\eqn{
 \sup_{\th \in \C} \left\|\H_S(\th)- \H_{[n]}(\th) \right\|_2  
  \leq 
8K \sqrt{
  \frac{p}{|S|} 
\log\left (\frac{  \diam^2 \left ( M_n + M_{|S|}\right)^2 |S|}{K^2}\right)
}.
}
Combining this with the bound stated in Eq.(\ref{eq::mainBound}), 
we conclude the proof.
\end{proof}

\begin{proof}[Proof of Theorem \ref{thm::coefficients}]
\eqn{
\left | \xi_1^t - \xi_1^*\right| =&
\left | \frac{\lambda_p^t}{\lambda_{r+1}^t} - 
\frac{\lambda_p^*}{\lambda_{r+1}^*}\right| +
cK\sqrt{\frac{ \log(p)}{|S_t|}}\left | \frac{1}{\lambda_{r+1}^t} - 
\frac{1}{\lambda_{r+1}^*}\right| \\
\leq &\frac{K |\lambda_{r+1}^t-\lambda_{r+1}^*| + K|\lambda_{p}^t-\lambda_{p}^*| }
{\lambda_{r+1}^*\lambda_{r+1}^t}+
cK\sqrt{\frac{ \log(p)}{|S_t|}}\frac{|\lambda_{r+1}^t-\lambda_{r+1}^*|}{\lambda_{r+1}^*\lambda_{r+1}^t} 
}
By the Weyl's 
and matrix Hoeffding's \cite{tropp2012user} inequalities
(See Eq.~(\ref{eq::hoeffding}) for details), we can write 
\eqn{
|\lambda_{j}^t-\lambda_{j}^*| \leq \left \|\H_{S_t}(\hth^t) - \H_{[n]}(\th_*)\right \|_2 \leq 
cK \sqrt{\frac{\log(p)}{|S_t|}},
}
with probability $1-2/p$. Then,
\eqn{
\left | \xi_1^t - \xi_1^*\right|
\leq &
\frac{c'K \sqrt{\frac{\log(p)}{|S_t|}}}
{\lambda_{r+1}^*\lambda_{r+1}^t}+
\frac{c''K^2{\frac{ \log(p)}{|S_t|}}}{\lambda_{r+1}^*\lambda_{r+1}^t}, \\
\leq &
\frac{c'''K \sqrt{\frac{\log(p)}{|S_t|}}}
{k\left(k- cK\sqrt{\frac{\log(p)}{|S_t|}}\right) },
}
for some constants $c$ and $c'''$.
\end{proof}

\begin{proof}[Proof of Corollary \ref{cor::glm}]
Observe that 
$f_i(\th) =  \Phi (\<x_i,\th\>)-y_i\<x_i , \th \> $,  
and $\gradS_\th f_i(\th) = x_ix_i^T \ddphi(\<x_i,\th\>)$.
For an index set $S$, we have $\forall \th, \th' \in \C$
\eqn{
\left \|\H_{S}(\th) - \H_{S}(\th')\right\|_2 
=& \left\|\frac{1}{|S|}\sum_{i\in S} x_i x_i^T \left[ \ddphi (\<x_i,\th\>) - \ddphi (\<x_i,\th'\>)\right ]\right \|_2,\\
&\leq  L \max_{i \in S}\|x_i\|_2^3 \ \| \th - \th' \|_2 
\leq  L R_x^{3/2} \ \| \th - \th' \|_2.
}
Therefore, the Assumption \ref{as::Lipschitz} is satisfied with 
the Lipschitz constant $M_{|S_t|} \coloneqq L R_x^{3/2}.$
Moreover, by the inequality
\eqn{
\left \| \gradS_\th f_i(\th)  \right \|_2 = \| x_i \|_2^2 \ \ddphi(\<x_i,\th\>)\leq  R_x,= \left \| x_ix_i^T \ddphi(\<x_i,\th\>) \right \|_2
}
the Assumption \ref{as::bound} is satisfied for 
$
K \coloneqq  R_x .
$
We conclude the proof by applying Theorem \ref{thm::uniformSampling}. 
\end{proof}

\section{Properties of composite convergence}\label{sec::theoryExtra}

In the previous sections, we showed that \ALG gets a composite convergence rate, i.e., 
the $\ell_2$ distance from the current iterate to the optimal value can be 
bounded by the sum of a linearly and a quadratically converging term.
We study such convergence rates assuming the coefficients do not change at each iteration $t$. 
Denote by $\Delta_t$, the aforementioned $\ell_2$ distance at iteration step $t$, i.e.,
\eq{\label{eq::composite}
\Delta_t = \|\hth^t - \theta_* \|_2,
}
and assume that the algorithm gets a composite convergence rate as
\eqn{
\forall t\geq 0,\ \ \ \ \ \ \ \Delta_{t+1} \leq \xi_1 \Delta_{t} + \xi_2 \Delta_{t}^2,
}
where $\xi_1,\xi_2>0$ denote the coefficients of linearly and quadratically converging terms, respectively.

\subsection{Local asymptotic rate}
We state the following theorem on the local convergence 
properties of compositely converging algorithms.

\begin{lemma}\label{lem::localConvergence}
For a compositely converging algorithm as in Eq.~(\ref{eq::composite})
with coefficients $1>\xi_1,\xi_2>0$, if the initial distance $\Delta_0$ satisfies 
$\Delta_0 < (1-\xi_1)/\xi_2$, then we have
\eqn{
\limsup_{t\to \infty} - \frac{1}{t}\log(\Delta_t) \leq - \log(\xi_1).
}
\end{lemma}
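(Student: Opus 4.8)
The plan is to show that once $\Delta_0 < (1-\xi_1)/\xi_2$, the iterates not only converge to $0$ but do so at an asymptotically linear rate governed by $\xi_1$. First I would establish that the region $\{\Delta < (1-\xi_1)/\xi_2\}$ is forward invariant: if $\Delta_t < (1-\xi_1)/\xi_2$ then
\eqn{
\Delta_{t+1} \leq \xi_1 \Delta_t + \xi_2 \Delta_t^2 = \Delta_t(\xi_1 + \xi_2 \Delta_t) < \Delta_t \cdot 1 = \Delta_t,
}
so $\{\Delta_t\}$ is strictly decreasing (while positive) and stays in the region. Hence $\Delta_t \downarrow \Delta_\infty$ for some $\Delta_\infty \geq 0$, and passing to the limit in the recursion forces $\Delta_\infty \leq \xi_1 \Delta_\infty + \xi_2 \Delta_\infty^2$, i.e. $\Delta_\infty(1-\xi_1-\xi_2\Delta_\infty) \leq 0$; since $1 - \xi_1 - \xi_2\Delta_\infty \geq 1 - \xi_1 - \xi_2\Delta_0 > 0$, we get $\Delta_\infty = 0$.

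Next I would convert the convergence $\Delta_t \to 0$ into the claimed asymptotic rate. Since $\Delta_t \to 0$, for every $\varepsilon > 0$ there is $T$ such that $\xi_2 \Delta_t \leq \varepsilon$ for all $t \geq T$, and therefore
\eqn{
\Delta_{t+1} \leq (\xi_1 + \xi_2 \Delta_t)\Delta_t \leq (\xi_1 + \varepsilon)\Delta_t \qquad \text{for all } t \geq T.
}
Iterating this from $T$ gives $\Delta_t \leq (\xi_1+\varepsilon)^{t-T} \Delta_T$ for $t \geq T$, hence
\eqn{
-\frac{1}{t}\log \Delta_t \geq -\frac{t-T}{t}\log(\xi_1+\varepsilon) - \frac{1}{t}\log \Delta_T,
}
and taking $\limsup_{t\to\infty}$ (note $\log(\xi_1+\varepsilon) < 0$ once $\varepsilon$ is small enough that $\xi_1 + \varepsilon < 1$, and $-\log\Delta_T/t \to 0$) yields $\liminf_{t\to\infty} \left(-\tfrac1t \log\Delta_t\right) \geq -\log(\xi_1+\varepsilon)$. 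Actually for the stated $\limsup$ inequality we need the reverse direction, so instead I would use the crude lower bound $\Delta_{t+1} \leq (\xi_1 + \xi_2\Delta_0)\Delta_t$ valid for \emph{all} $t\geq 0$ to get $\Delta_t \leq (\xi_1 + \xi_2\Delta_0)^t \Delta_0$; but this only gives rate $\xi_1 + \xi_2\Delta_0$, not $\xi_1$. The correct route to the $\limsup \leq -\log\xi_1$ bound is: since $\xi_2\Delta_t \to 0$, for each $\varepsilon>0$ eventually $\Delta_{t+1}\leq(\xi_1+\varepsilon)\Delta_t$, so $\limsup_t \left(-\tfrac1t\log\Delta_t\right) \leq -\log(\xi_1+\varepsilon)$ — wait, that is again the wrong direction. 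Let me reconsider: from $\Delta_t \leq (\xi_1+\varepsilon)^{t-T}\Delta_T$ we get $\log\Delta_t \leq (t-T)\log(\xi_1+\varepsilon) + \log\Delta_T$, so $-\log\Delta_t \geq -(t-T)\log(\xi_1+\varepsilon) - \log\Delta_T$, giving a \emph{lower} bound on $-\tfrac1t\log\Delta_t$, hence $\liminf \geq -\log(\xi_1+\varepsilon) \to -\log\xi_1$ as $\varepsilon\to 0$. So in fact $\liminf_{t\to\infty}\left(-\tfrac1t\log\Delta_t\right) \geq -\log\xi_1$, and combined with the trivial upper bound $\Delta_{t+1} \leq \xi_1\Delta_t(1 + \tfrac{\xi_2}{\xi_1}\Delta_t)$... the $\limsup$ statement in the lemma should follow from the matching upper estimate on the rate, which I would get from the fact that $\Delta_t > 0$ cannot decay faster than geometrically with ratio close to $\xi_1$ when the quadratic term is negligible — more precisely from $\Delta_{t+1}/\Delta_t \to \xi_1$ is false in general, so the $\limsup \leq -\log\xi_1$ must come purely from a lower bound on $\Delta_t$.

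The main obstacle, then, is producing a matching \emph{lower} bound $\Delta_t \gtrsim \xi_1^t$ (up to subexponential factors) so that $-\tfrac1t\log\Delta_t \leq -\log\xi_1 + o(1)$. For this I would note $\Delta_{t+1} \geq \xi_1\Delta_t$ trivially (since $\xi_2\Delta_t^2 \geq 0$), which immediately gives $\Delta_t \geq \xi_1^t \Delta_0$, hence $-\tfrac1t\log\Delta_t \leq -\log\xi_1 - \tfrac1t\log\Delta_0 \to -\log\xi_1$, so $\limsup_{t\to\infty}\left(-\tfrac1t\log\Delta_t\right) \leq -\log\xi_1$. This is exactly the claimed inequality, and it does not even require the invariance argument — only $\Delta_0 > 0$ and $\xi_2 \geq 0$. (The hypothesis $\Delta_0 < (1-\xi_1)/\xi_2$ is what guarantees $\Delta_t$ actually tends to $0$ so that the statement is non-vacuous, and it is what one uses for the companion lower bound $\liminf \geq -\log\xi_1$ establishing that the rate is \emph{exactly} $\xi_1$.) So the proof reduces to the two one-line estimates $\xi_1^t\Delta_0 \leq \Delta_t$ and, for the sharpness, $\Delta_t \leq (\xi_1+\varepsilon)^{t-T}\Delta_T$ via forward invariance; I would present both, with invariance handled as in the first paragraph.
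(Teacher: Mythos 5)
Your final step --- ``$\Delta_{t+1}\ge\xi_1\Delta_t$ trivially (since $\xi_2\Delta_t^2\ge 0$)'' --- is the genuine gap. The hypothesis in Eq.~(\ref{eq::composite}) is the one-sided bound $\Delta_{t+1}\le\xi_1\Delta_t+\xi_2\Delta_t^2$; it places no lower bound on $\Delta_{t+1}$ at all, so the nonnegativity of the quadratic term buys you nothing in the $\ge$ direction. (Take $\Delta_t=0$ for all $t\ge1$: the recursion is satisfied, yet $-\tfrac1t\log\Delta_t=+\infty$.) Your own earlier diagnosis was correct: the stated $\limsup\le-\log\xi_1$ is a lower bound on $\Delta_t$ in disguise, and it cannot be extracted from an upper-bound recursion. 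The ``fix'' you settled on silently upgrades the inequality to an equality.

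What the hypotheses do support --- and what your second paragraph correctly derives before you talk yourself out of it --- is the opposite direction: since $\Delta_t\to0$ (your forward-invariance argument for this is fine and is the same first step the paper takes), for every $\delta>0$ one eventually has $\Delta_{t+1}\le(\xi_1+\delta)\Delta_t$, hence $\Delta_t\le(\xi_1+\delta)^{t-T}\Delta_T$ and $\liminf_{t\to\infty}\bigl(-\tfrac1t\log\Delta_t\bigr)\ge-\log\xi_1$; that is, the asymptotic decay is \emph{at least} linear with ratio $\xi_1$. This is also all that the paper's own proof establishes: its final display reverses the inequality sign relative to what the preceding line actually yields, but the substance is exactly the eventually-linear contraction you wrote down. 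So the honest resolution is that the lemma's $\limsup$ claim is a misstatement of the provable $\liminf$ claim; prove the latter, and do not manufacture a lower bound on $\Delta_t$ that the hypotheses do not contain.
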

The above theorem states that the local convergence of a compositely 
converging algorithm will be dominated by the linear term.

\begin{proof}[Proof of Lemma \ref{lem::localConvergence}]
The condition on the initial point implies that $\Delta_t \to 0$ as $t \to \infty$. 
Hence, for any given $\delta >0 $, there exists a positive integer $T$ such that 
$\forall t \geq T$, we have $\Delta_t < \delta/\xi_2$. For such values of $t$, we write
\eqn{
\xi_1 + \xi_2 \Delta_t < \xi_1 + \delta,
}
and using this inequality we obtain
\eqn{
\Delta_{t+1} <
 (\xi_1  + \delta) \Delta_{t} .
}
The convergence of above recursion gives
\eqn{
-\frac{1}{t}\log(\Delta_t) < -\log(\xi_1  +  \delta)-\frac{1}{t}\log(\Delta_0).
}
Taking the limit on both sides concludes the proof.
\end{proof}

\subsection{Number of iterations}
The total number of iterations, combined with the per-iteration cost, 
determines the total complexity of an algorithm. 
Therefore, it is important to derive an upper bound on 
the total number of iterations of a compositely converging algorithm.

\begin{lemma}\label{lem::numIterations}
For a compositely converging algorithm as in Eq.~(\ref{eq::composite})
with coefficients $\xi_1,\xi_2\in(0,1)$, 
assume that the initial distance $\Delta_0$ satisfies 
$\Delta_0 < (1-\xi_1)/\xi_2$ and for a given tolerance $\epsilon$, define the interval
$$D = \left(\max\left\{\epsilon,\frac{\xi_1\Delta_0}{1-\xi_2\Delta_0}\right\},\Delta_0\right).$$
Then the total number of iterations needed to 
approximate the true minimizer with $\epsilon$ tolerance is upper bounded by $T(\delta_*)$, where
\eqn{
\delta_* = \argmin_{\delta \in D} T(\delta)
}
and
\eqn{
T(\delta) = \log_2\left( \frac{\log\left(\xi_1  + \delta\xi_2\right)}{\log\left(\frac{\Delta_0}{\delta}(\xi_1  + \delta\xi_2)\right)} \right)+
\frac{\log\left(\frac{\epsilon}{\delta}\right)}{\log(\xi_1+ \xi_2\delta)}.
}
\end{lemma}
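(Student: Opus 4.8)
The plan is to split the run of the algorithm into two \emph{phases}: an initial quadratic-dominated phase, during which the quadratic term $\xi_2\Delta_t^2$ drives the contraction, and a final linear-dominated phase, during which $\xi_1\Delta_t$ is the controlling term. The threshold $\delta$ separating the two phases is a free parameter that we will optimize at the end; this is exactly why the statement quantifies over $\delta \in D$. First I would fix $\delta$ in the admissible range and establish the recursion bound $\Delta_{t+1} \le (\xi_1 + \xi_2\delta)\Delta_t$ valid \emph{as long as} $\Delta_t \le \delta$: indeed, $\Delta_{t+1} \le \xi_1\Delta_t + \xi_2\Delta_t^2 = (\xi_1 + \xi_2\Delta_t)\Delta_t \le (\xi_1 + \xi_2\delta)\Delta_t$. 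So once the iterate is below $\delta$, it contracts geometrically with rate $\rho_\delta := \xi_1 + \xi_2\delta < 1$ (the condition $\delta < \Delta_0 < (1-\xi_1)/\xi_2$ guarantees $\rho_\delta<1$), and the number of further iterations to reach tolerance $\epsilon$ is at most $\log(\epsilon/\delta)/\log(\rho_\delta)$, which is the second summand in $T(\delta)$.

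For the first phase I would track how many iterations it takes to get from $\Delta_0$ down to $\delta$. While $\Delta_t > \delta$, write $\Delta_{t+1} \le \xi_1\Delta_t + \xi_2\Delta_t^2$; the natural move is to introduce the rescaled quantity $u_t := \Delta_t/\delta$ or, even cleaner, $v_t := (\xi_2/(1-\xi_1))\,\Delta_t$ so that $v_{t+1} \le v_t(\xi_1 + (1-\xi_1)v_t) \le v_t^2$ whenever $v_t$ is at least, say, $\xi_1/(1-\xi_1)$ — but here it is more direct to use the bound $\Delta_{t+1} \le (\xi_1 + \xi_2\Delta_t)\Delta_t$ together with the fact that $\xi_1 + \xi_2\Delta_t$ itself shrinks as $\Delta_t$ shrinks. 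Taking logarithms, let $a_t := \log(\xi_1 + \xi_2\Delta_t)$; then $\log\Delta_{t+1} \le a_t + \log\Delta_t$ and $a_t$ decreases. The telescoped/iterated form gives a doubly-logarithmic count: solving $\Delta_t \le \delta$ yields roughly $\log_2$ of the ratio of $\log(\xi_1+\xi_2\delta)$ to $\log\big((\Delta_0/\delta)(\xi_1+\xi_2\delta)\big)$, which is precisely the first summand of $T(\delta)$. The lower endpoint $\xi_1\Delta_0/(1-\xi_2\Delta_0)$ of the interval $D$ is the value of $\Delta_1$ (or a bound on it), so one cannot demand $\delta$ smaller than what is reached after the very first step; and $\max\{\epsilon,\cdot\}$ simply records that there is nothing to do once we are already within tolerance.

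Finally, adding the iteration counts of the two phases gives $T(\delta)$, and since $\delta$ was arbitrary in $D$ the total is bounded by $\min_{\delta\in D} T(\delta) = T(\delta_*)$. The main obstacle I anticipate is making the first-phase count rigorous: the quadratic recursion does not telescope cleanly, so one has to be careful with the "effective rate" $\xi_1 + \xi_2\Delta_t$ changing every step — the honest way is to note it is monotone decreasing and bound the number of steps by comparing against the worst (largest) rate $\xi_1+\xi_2\Delta_0$ on the one hand for a crude bound, then sharpen by observing the rate is at most $\xi_1 + \xi_2\delta$ near the end, and interpolate to get the $\log_2$ form. A secondary nuisance is checking all the logarithms have the right sign (both $\log(\xi_1+\xi_2\delta)$ and the denominators are negative, so their ratio is positive) and that $D$ is nonempty, which follows from $\Delta_0 < (1-\xi_1)/\xi_2$.
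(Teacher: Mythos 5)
Your overall skeleton matches the paper's: split the run at a threshold $\delta\in D$, bound the tail phase by the linear recursion $\Delta_{t+1}\le(\xi_1+\xi_2\delta)\Delta_t$ valid once $\Delta_t\le\delta$ (this part of your argument is correct and identical to the paper's), and then optimize over $\delta$. However, there is a genuine gap in your treatment of the first phase, and you flag it yourself: you never actually produce the doubly-logarithmic count. Your telescoping of $\log\Delta_{t+1}\le a_t+\log\Delta_t$ with $a_t=\log(\xi_1+\xi_2\Delta_t)$ decreasing only yields a product of per-step linear rates, which gives a single-log iteration count, not the $\log_2(\cdot)$ term in $T(\delta)$; and your proposed remedy (``interpolate'' between the worst rate $\xi_1+\xi_2\Delta_0$ and the rate near the end) is not an argument. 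The missing idea is the mirror image of the trick you used for the tail phase: while $\Delta_t>\delta$ one has $\xi_1\Delta_t\le(\xi_1/\delta)\Delta_t^2$, hence
\[
\Delta_{t+1}\;\le\;\Bigl(\tfrac{\xi_1}{\delta}+\xi_2\Bigr)\Delta_t^2 ,
\]
a \emph{pure} quadratic recursion with constant $C=\xi_1/\delta+\xi_2$. Iterating gives $C\Delta_t\le(C\Delta_0)^{2^t}$, and $C\Delta_0=\frac{\Delta_0}{\delta}(\xi_1+\xi_2\delta)<1$ precisely because $\delta>\xi_1\Delta_0/(1-\xi_2\Delta_0)$ (this is the real role of the lower endpoint of $D$, not that it bounds $\Delta_1$ as you suggest); solving $(C\Delta_0)^{2^t}\le C\delta=\xi_1+\xi_2\delta$ yields exactly the first summand $\log_2\bigl(\log(\xi_1+\delta\xi_2)/\log(\tfrac{\Delta_0}{\delta}(\xi_1+\delta\xi_2))\bigr)$.

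Once you insert that one-line bound, the rest of your outline goes through and coincides with the paper's proof, so the fix is small but essential: without it the stated form of $T(\delta)$ is not derived.
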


\begin{proof}[Proof of Lemma \ref{lem::numIterations}]
We have $\Delta_t \to 0$ as $t \to \infty$ by the condition on initial point $\Delta_0$. Let 
$\delta \in D$
be a real number and $t_1$ be the last iteration step such that $\Delta_t > \delta$. Then $\forall t \geq t_1$,
\eqn{
\Delta_{t+1} \leq & \xi_1 \Delta_{t} + \xi_2 \Delta_{t}^2,\\
\leq & \left(\frac{\xi_1}{\delta}  + \xi_2\right) \Delta_{t}^2.\\
}
Therefore, in this regime, the convergence rate of the algorithm is dominated by a 
quadratically converging term with coefficient $(\xi_1/\delta + \xi_2)$. The total number of 
iterations needed to attain a tolerance of $\delta$ is upper bounded by
\eqn{
t_1 \leq \log_2\left( \frac{\log\left(\xi_1  + \delta\xi_2\right)}{\log\left(\frac{\Delta_0}{\delta}(\xi_1  + \delta\xi_2)\right)} \right).
}
When $\Delta_t <\delta$, namely $t>t_1$, we have
\eqn{
\Delta_{t+1} \leq & \xi_1 \Delta_{t} + \xi_2 \Delta_{t}^2,\\
\leq & \left(\xi_1+ \xi_2\delta\right) \Delta_{t}.\\
}
In this regime, the convergence rate is dominated by a linearly converging term with 
coefficient $\left(\xi_1+ \xi_2\delta\right)$. Therefore, the total number of 
iterations since $t_1$ until a tolerance of $\epsilon$ is reached can be upper bounded by
\eqn{
t_2 \leq \frac{\log\left(\frac{\epsilon}{\delta}\right)}{\log(\xi_1+ \xi_2\delta)}.
}
Hence, the total number of iterations needed for a composite algorithm 
as in Eq.~\ref{eq::composite} to reach a tolerance of $\epsilon$ is upper bounded by
\eqn{
T(\delta) =t_1+t_2 =  \log_2\left( \frac{\log\left(\xi_1  + \delta\xi_2\right)}{\log\left(\frac{\Delta_0}{\delta}(\xi_1  + \delta\xi_2)\right)} \right)+
\frac{\log\left(\frac{\epsilon}{\delta}\right)}{\log(\xi_1+ \xi_2\delta)}.
}
The above statement holds for any $\delta \in D$. Therefore, we minimize $T(\delta)$ over the set $D$.
\end{proof}

\section{Choosing the step size $\step_t$}\label{sec::stepsize}

In most optimization algorithms, step size plays a crucial role.  
If the dataset is so large that one cannot try out many values of the step size. 
In this section, 
we describe an efficient and adaptive way for this purpose by using the 
theoretical results derived in the previous sections.

In the proof of Lemma \ref{lem::uniformSampling}, we observe that the 
convergence rate of \ALG is governed by the term
\eqn{
\left\| \I - \step_t \Q^t H_{[n]} (\tth)\right \|_2 \leq \left\| \I - \step_t \Q^t H_{[n]} (\hth^t)\right \|_2 
+ \step_t\left\| \Q^t  \right \|_2  \left\| H_{[n]} (\hth^t) - H_{[n]} (\tth)\right \|_2
}
where $\Q^t$ is defined as in Algorithm 1. The right hand side of the above equality 
has a linear dependence on $\step_t$. We will see later that this term has no 
effect in choosing the right step size. On the other hand, the first term on 
the right hand size can be written as,
\eqn{
\left\| \I - \step_t \Q^t H_{[n]} (\hth^t)\right \|_2  = &
\max\left\{ 
1-\step_t \lmin(\Q^t H_{[n]} (\hth^t)),  
\step_t \lmax(\Q^t H_{[n]} (\hth^t)) -1
\right\}.
}

If we optimize the above quantity over 
$\step_t$, we obtain the optimal step size as
\eq{\label{eq::optimalGamma}
\step_t = \frac{2}{
\lmin(\Q^t H_{[n]} (\hth^t))+\lmax(\Q^t H_{[n]} (\hth^t))
}.
}
It is worth mentioning that for the Newton's method where $\Q^t = H_{[n]} (\hth^t)^{-1}$, the above quantity is equal to 1.

Since \ALG does not compute the full Hessian $\H_{[n]} (\hth^t)$ (which would take $\O(np^2)$ computation), we will relate the quantity in Eq.~(\ref{eq::optimalGamma}) to the first few eigenvalues of $\Q^t$. Therefore, our goal is to relate the eigenvalues of $\Q^t H_{[n]} (\hth^t)$ to that of $\Q^t$.

By the Lipschitz continuity of eigenvalues 
, we write
\eq{\label{eq::gammaBound1}
\left |1 -  \lmax(\Q^t H_{[n]} (\hth^t))\right| \leq &\left\|\Q^t\right\|_2 \left\| H_{S} (\hth^t)-H_{[n]} (\hth^t)\right\|_2,\nonumber\\
= & \frac{1}{\lambda^t_{r+1}}\O \left (\sqrt{\frac{\log(p)}{|S|}}\right).
}

Similarly, for the minimum eigenvalue, we can write
\eq{\label{eq::gammaBound2}
\left |\frac{\lambda^t_p}{\lambda^t_{r+1}} -  \lmin(\Q^t H_{[n]} (\hth^t))\right| \leq \frac{1}{\lambda_{r+1}}\O \left (\sqrt{\frac{\log(p)}{|S|}}\right).
}

One might be temped to use 1 and $\lambda^t_p/\lambda^t_{r+1}$ for the minimum and the maximum eigenvalues of $\Q^tH_{[n]} (\hth^t)$, but the optimal values might be slightly different from these values if the sample size is chosen to be small. On the other hand, the eigenvalues $\lambda^t_{r+1}$ and $\lambda^t_p$ can be computed with $\O(p^2)$ cost and we already know the order of the error term. That is, one can calculate $\lambda^t_{r+1}$ and $\lambda^t_p$ and use the error bounds to correct the estimate. 

The eigenvalues of the sample covariance matrix will concentrate around the true values, spreading to be larger for large eigenvalues and smaller for the small eigenvalues. That is, if we will we will overestimate if we estimate $\lambda_1$ with $\lambda_1^t$. Therefore, if we use 1, we will always underestimate the value of $\lmax(\Q^t H_{[n]} (\hth^t))$, which, based on Eq.~(\ref{eq::gammaBound1}) and Eq.~(\ref{eq::gammaBound2}), suggests a correction term of 
$\O \left (\sqrt{{\log(p)}/{|S|}}\right)$. Further, the top $r+1$ eigenvalues of $[Q^t]^{-1}$ are close to the eigenvalues of $H_{[n]} (\hth^t)$, but shifted upwards if $p/2 >r$. When $p/2 <r$, we see an opposite behavior. Hence, we add or subtract a correction term of order $\O \left (\sqrt{{\log(p)}/{|S|}}\right)$ to $\lambda^t_p/\lambda^t_{r+1}$ whether $p/2 > r$ or $p/2 < r$, respectively. The corrected estimators could be written as

\eqn{
&\widehat{\lmax}\left(\Q^t H_{[n]} (\hth^t)\right) = 1 + \O \left (\sqrt{\frac{\log(p)}{|S|}}\right),\\
&\widehat{\lmin}\left(\Q^t H_{[n]} (\hth^t)\right) = \frac{\lambda_p}{\lambda_{r+1}} + \O \left (\sqrt{\frac{\log(p)}{|S|}}\right)
\ \ \ \ \ \text{if $p/2>r$},\\
&\ \ \ \ \ \ \ \ \ \ \ \ \ \ \ \ \ \ \ \ \ \  = \frac{\lambda_p}{\lambda_{r+1}} - \O \left (\sqrt{\frac{\log(p)}{|S|}}\right)
\ \ \ \ \ \text{if $p/2< r$}.
}

We are more interested in the case where $p/2 >r$. In this case, we suggest the step size for the iteration step $t$ as

\eqn{
\step_t = \frac{2}{
1 + \frac{\lambda^t_p}{\lambda^t_{r+1}} +\O \left (\sqrt{\frac{\log(p)}{|S|}}\right)
}
}
which uses the eigenvalues that are already computed to construct $\Q^t $. Contrary to the most algorithms, the optimal step size of \ALG is generally larger than 1.

\section{Further experiments and details}\label{sec::furtherExperiments}

\begin{figure}[t]
\centering
  \includegraphics[width=6.5in]{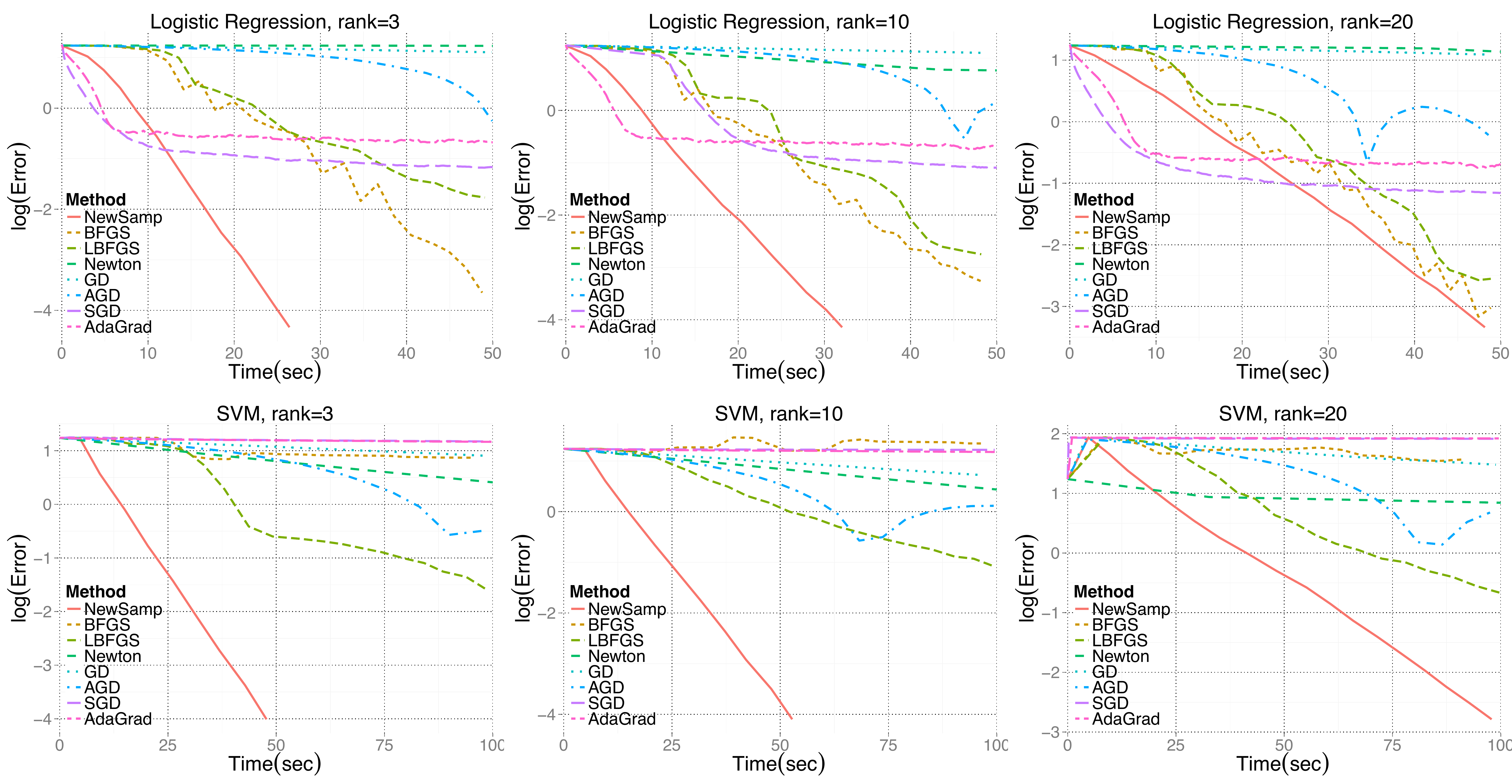}
  \caption{\label{fig::plot2}
The plots demonstrate the behavior of several optimization methods on a synthetic data set for training SVMs. The elapsed time in seconds versus $\log$ of $\ell_2$-distance to the true minimizer is plotted. Red color represents the proposed method \ALG.
}
\end{figure}

In this section, we present the details of the experiments presented in Figure \ref{fig::plot} 
and provide additional simulation results.

We first start with additional experiments. 
The goal of this experiment is to further analyze the effect of rank in the performance of \ALG. 
We experimented using $r$-spiked model for $r=3,10,20$. The case $r=3$ was already presented in Figure \ref{fig::plot}, which is included in Figure \ref{fig::plot2} to ease the comparison.
The results are presented in Figures \ref{fig::plot2} and the details are summarized in Table \ref{tab::simdetails}. In the case of LR optimization, we observe through Figure \ref{fig::plot2} that stochastic algorithms enjoy fast convergence in the beginning but slows down later as they get close to the true minimizer. The algorithms that come closer to \ALG in terms of performance are BFGS and LBFGS. Especially when $r=20$, performance of BFGS and that of \ALG are similar, yet \ALG still does better. 
In the case of SVM optimization, the algorithm that comes closer to \ALG is Newton's method. 

We further demonstrate how the algorithm coefficients $\xi_1$ and $\xi_2$ between datasets in Figure \ref{fig::xiChange}.
\begin{table}
\centering
\begin{tabular}{|l|ll|ll|lr|}
\multicolumn{5}{c}{\hspace{1.1in}Logistic Regression}\\
\hline
    & \multicolumn{2}{c}{Rank=3}  \vline& \multicolumn{2}{c}{Rank=10}\vline& \multicolumn{2}{c}{Rank=20}\vline\\
\hline
Method    & Elapsed(sec) & Iter 	& Elapsed(sec)   &   Iter& Elapsed(sec)   &   Iter\\
\hline
\ALG        &26.412      &  12& 	32.059 	& 15		& 55.995	& 26\\
BFGS      &50.699      & 22& 	54.756 	& 31		& 56.606	& 34 \\
LBFGS      &103.590      &47  & 64.617 	& 37		& 107.708	& 67 \\
Newton   &18235.842     &  449&  	35533.516 	& 941		& 31032.893	& 777 \\
GD          &345.025   & 198	& 322.671 & 198 	& 311.946 & 197\\
AGD          &449.724   & 233	& 436.282 & 272 	& 450.734 & 290\\
\hline
\end{tabular}\\

\vspace{.1in}

\begin{tabular}{|l|ll|ll|lr|}
\multicolumn{5}{c}{\hspace{1.1in}Support Vector Machines}\\
\hline
    & \multicolumn{2}{c}{Rank=3}  \vline& \multicolumn{2}{c}{Rank=10}\vline& \multicolumn{2}{c}{Rank=20}\vline\\
\hline
Method    & Elapsed(sec) & Iter 	& Elapsed(sec)   &   Iter& Elapsed(sec)   &   Iter\\
\hline
\ALG        &47.755      &  8& 	52.767 	& 9		& 124.989	& 22\\
BFGS      &13352.254      & 2439& 	10672.657 	& 2219		& 21874.637	& 4290 \\
LBFGS      &326.526      & 67 & 218.706 	& 44		& 275.991	& 55 \\
Newton   &775.191     &  16&  	734.480 	& 16		& 4159.486	& 106 \\
GD          &1512.305   & 238	& 1089.413 & 237 	& 1518.063 & 269\\
AGD          &1695.44   & 239	& 1066.484 & 238 	& 1874.75 & 294\\
\hline
\end{tabular}
\caption{\label{tab::simdetails}
Details of the simulations presented in Figures \ref{fig::plot2}.
}
\end{table}

\begin{table}
\centering
\begin{tabular}{|l|l|l|l|r|}
\multicolumn{5}{c}{CT Slices Dataset}\\
\hline
    & \multicolumn{2}{c}{LR} \vline  & \multicolumn{2}{c}{SVM}\vline\\
\hline
Method    & Elapsed(sec) & Iter		& Elapsed(sec) & Iter\\
\hline
\ALG        &9.488      &  19& 	22.228 	& 33\\
BFGS      &9.568      & 38& 	2094.330 & 5668 \\
LBFGS      &51.919      & 217 & 165.261 	& 467		 \\
Newton   &14.162     &  5&  	58.562 	& 25		 \\
GD          &350.863   & 2317	& 1660.190 & 4828 	 \\
AGD          &176.302   & 915	& 1221.392 & 3635 	\\
\hline
\end{tabular}
%
%
\begin{tabular}{|l|l|l|l|r|}
\multicolumn{5}{c}{MSD Dataset}\\
\hline
    & \multicolumn{2}{c}{LR} \vline  & \multicolumn{2}{c}{SVM}\vline\\
\hline
Method    & Elapsed(sec) & Iter		& Elapsed(sec) & Iter\\
\hline
\ALG        &25.770      &  38& 	71.755 	& 49\\
BFGS      &43.537      & 75& 	9063.971 & 6317 \\
LBFGS      &81.835      & 143 & 429.957 	& 301		 \\
Newton   &144.121     &  30&  	100.375 	& 18		 \\
GD          &642.523   & 1129	& 2875.719 & 1847 	 \\
AGD          &397.912   & 701	& 1327.913 & 876 	\\
\hline
\end{tabular}\\
%
%
\begin{tabular}{|l|l|l|l|r|}
\multicolumn{5}{c}{Synthetic Dataset}\\
\hline
    & \multicolumn{2}{c}{LR} \vline  & \multicolumn{2}{c}{SVM}\vline\\
\hline
Method    & Elapsed(sec) & Iter		& Elapsed(sec) & Iter\\
\hline
\ALG        &26.412      &  12& 	47.755 	& 8\\
BFGS      &50.699      & 22& 	13352.254 & 2439 \\
LBFGS      &103.590      & 47 & 326.526 	& 67		 \\
Newton   &18235.842     &  449&  	775.191 	& 16		 \\
GD          &345.025   & 198	& 1512.305 & 238 	 \\
AGD          &449.724   & 233	& 1695.44 & 239 	\\
\hline
\end{tabular}\\
\caption{\label{tab::details}
Details of the experiments presented in Figure \ref{fig::plot}.}
\end{table}

\begin{figure}[b]
\centering
  \includegraphics[width=3.2in]{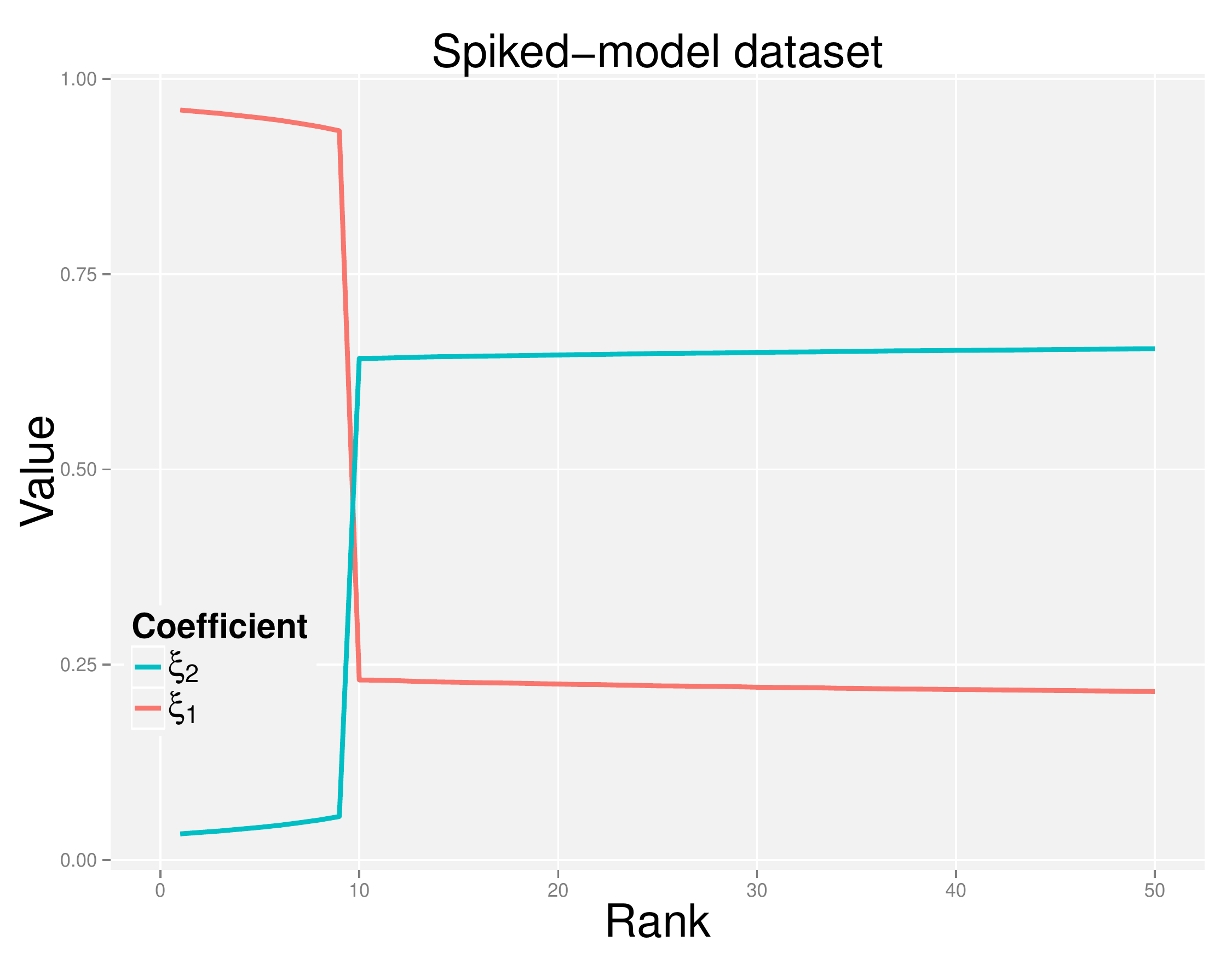}
\includegraphics[width=3.2in]{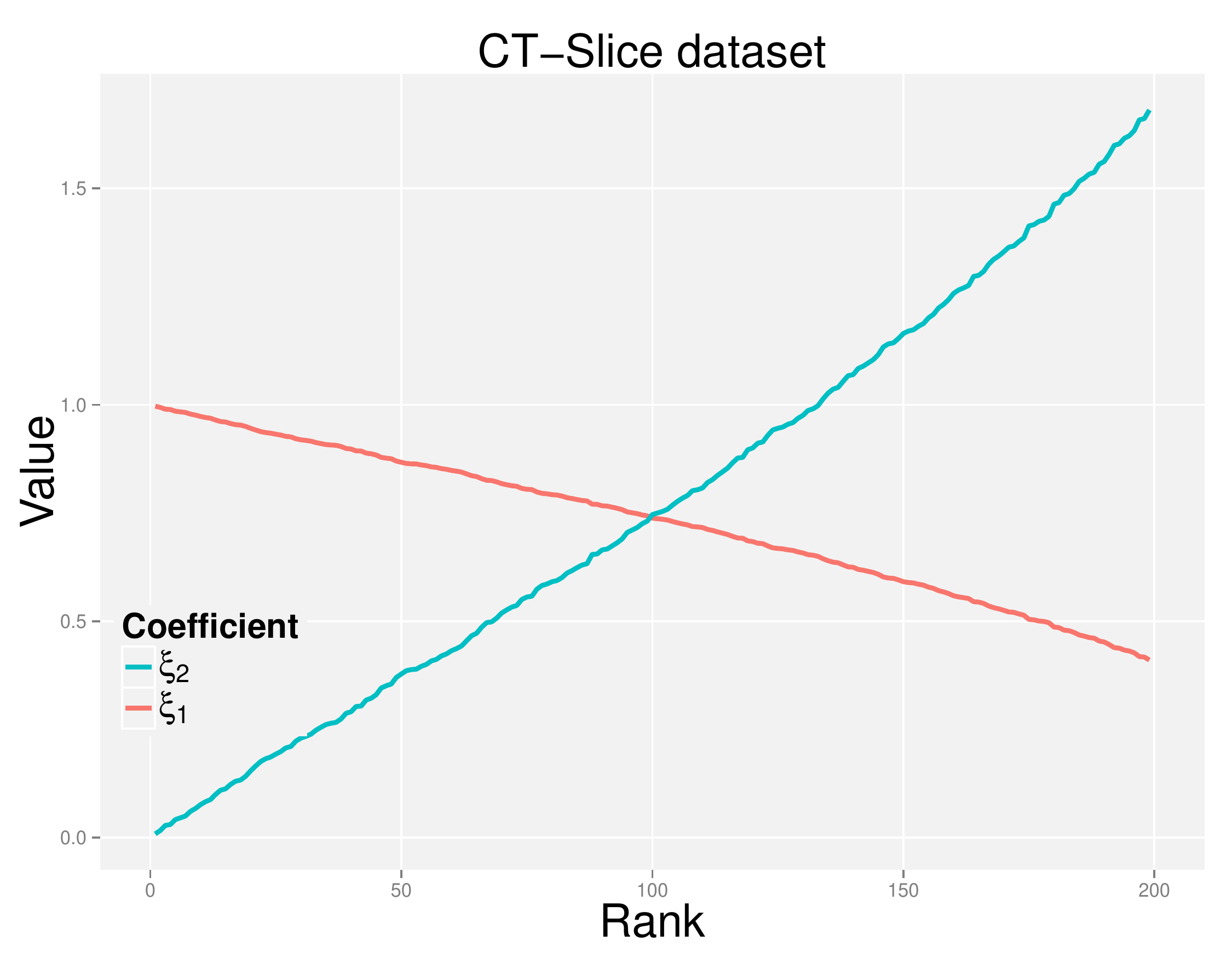}
\includegraphics[width=3.2in]{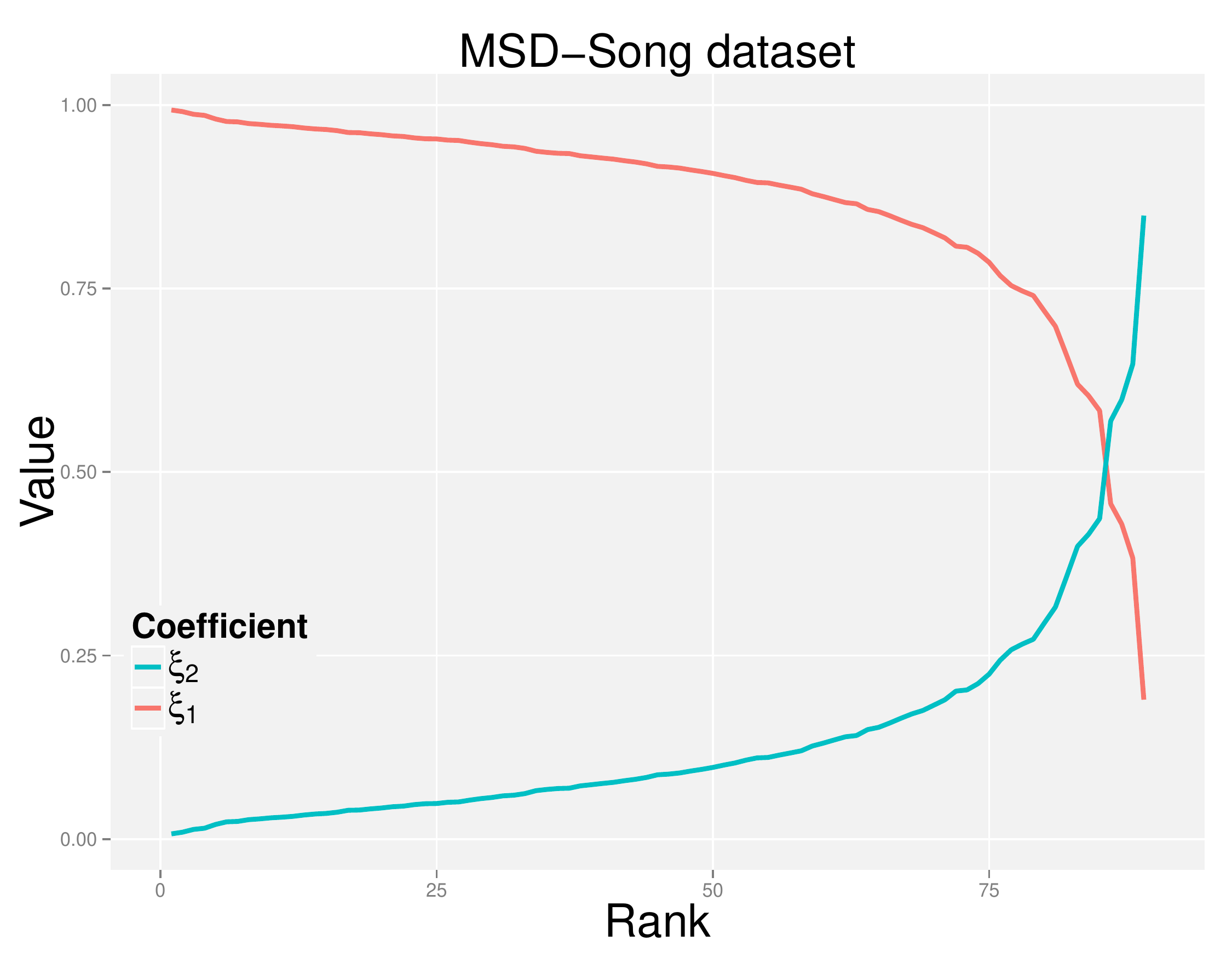}
\includegraphics[width=3.2in]{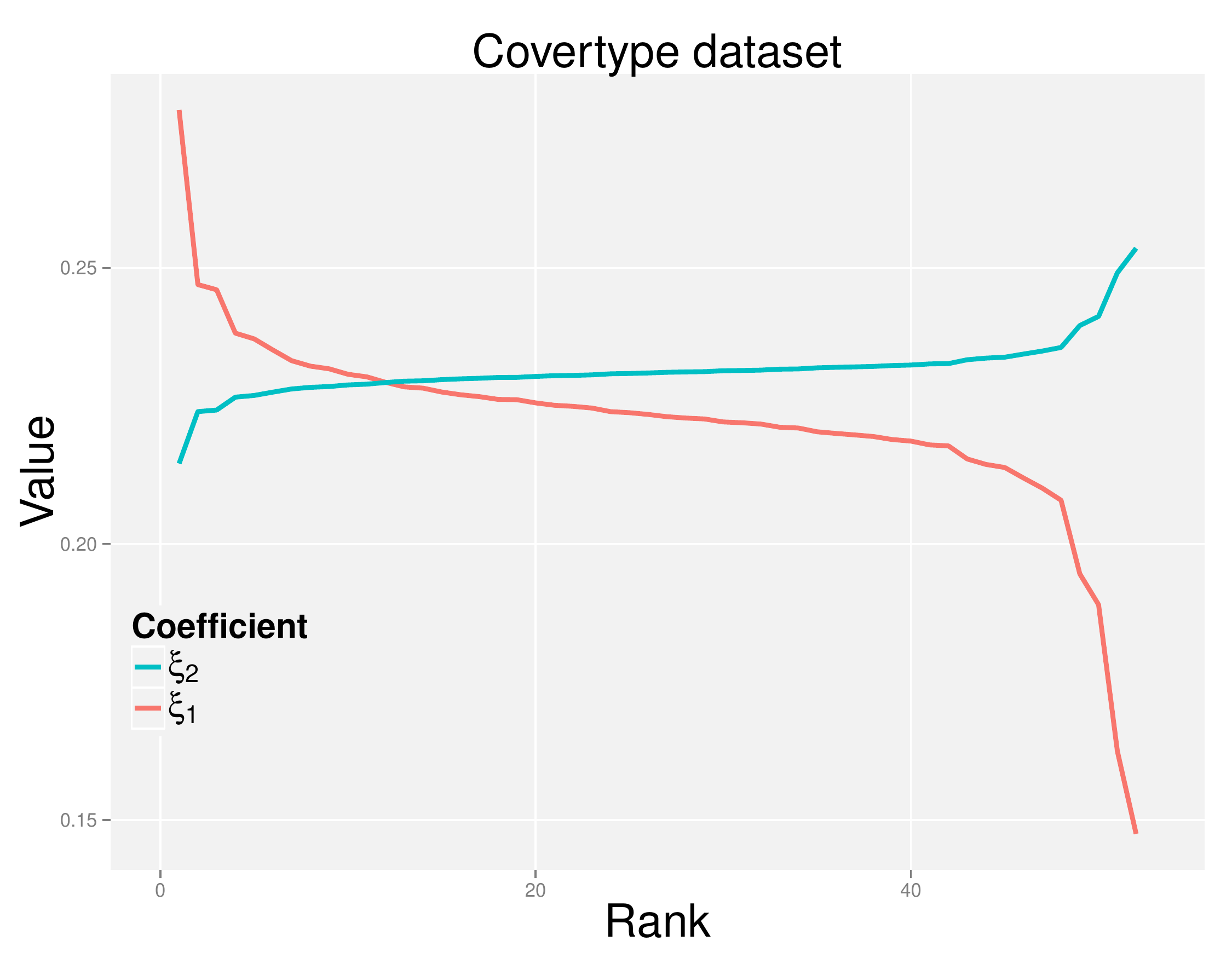}
  \caption{\label{fig::xiChange}
The plots demonstrate the behavior of $\xi_1$ and $\xi_2$ over several datasets.
}
\end{figure}

\section{Useful lemmas} \label{sec::auxiliary}

\begin{lemma}\label{lem::sphere}
Let $\C$ be convex and bounded set in $\reals^p$ and $T_\epsilon$ be an $\epsilon$-net over $\C$. Then,
\eqn{
|T_\epsilon|\leq \left(\frac{\diam}{2\epsilon/\sqrt{p}}\right)^p.
}
\end{lemma}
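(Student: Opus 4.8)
The plan is to use the classical volumetric covering argument. Here $|T_\epsilon|$ should be read as the cardinality of a \emph{minimal} $\epsilon$-net (equivalently, the claim is that an $\epsilon$-net of the stated size exists), and --- as the proof of Lemma~\ref{lem::generalSampling} tacitly uses --- such a net can be taken to lie inside $\C$. The one geometric input is that a bounded set of diameter $\diam$ sits inside an axis-aligned box every edge of which has length at most $\diam$, since each coordinate projection of $\C$ is an interval of length $\le\diam$; hence $\C$ (and, more usefully, its $\epsilon/2$-neighborhood) is contained in a cube of edge $\diam+\epsilon$. Convexity plays essentially no role beyond letting us place the net inside $\C$.

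I would then run the standard packing bound. Take $T_\epsilon$ to be a maximal $\epsilon$-separated subset of $\C$; it is automatically an $\epsilon$-net, and it lies in $\C$. The open Euclidean balls $B(x,\epsilon/2)$, $x\in T_\epsilon$, are pairwise disjoint and all contained in $\C+B(0,\epsilon/2)$, so comparing Lebesgue volumes gives $|T_\epsilon|\le \mathrm{vol}\bigl(\C+B(0,\epsilon/2)\bigr)\big/\mathrm{vol}\bigl(B(0,\epsilon/2)\bigr)\le (\diam+\epsilon)^{p}\big/\bigl(\omega_p(\epsilon/2)^{p}\bigr)$, where $\omega_p=\mathrm{vol}(B(0,1))=\pi^{p/2}/\Gamma(p/2+1)$. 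Since $\omega_p^{-1/p}$ is of order $\sqrt{p/(2\pi e)}$ by Stirling, the right-hand side is $\le\bigl((\diam+\epsilon)\sqrt p/(2\epsilon)\bigr)^{p}$ up to a subexponential factor, and absorbing $\diam+\epsilon$ into $\diam$ yields the claimed $\bigl(\diam/(2\epsilon/\sqrt p)\bigr)^{p}$. The same bound has an even more elementary incarnation that reproduces the exact constant: tile a $\diam$-cube containing $\C$ by sub-cubes of edge $2\epsilon/\sqrt p$ (each of Euclidean diameter $2\epsilon$, hence inside an $\epsilon$-ball about its center) and take as net points the centers of those sub-cubes that meet $\C$; there are at most $\lceil\diam\sqrt p/(2\epsilon)\rceil^{p}$ of them.

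The only real obstacle is bookkeeping the numerical constant so that it comes out exactly $\sqrt p/2$ rather than something coarser: a naive covering of the bounding box that keeps one point of $\C$ per sub-cube gives only the constant $\sqrt p$, a factor $2^{p}$ too large, so one genuinely needs either the sharp value of the Euclidean ball volume --- i.e.\ that $(\mathrm{vol}\,B(0,1))^{1/p}\sqrt p\to\sqrt{2\pi e}>2$ --- or the centered-grid construction above. The remaining items (the ceiling $\lceil\cdot\rceil$, replacing $\diam+\epsilon$ by $\diam$, swallowing the subexponential $\Gamma$-factor) are routine in the regime $p\gg1$ with $\epsilon$ small; in any event the lemma is invoked in Lemma~\ref{lem::generalSampling} only after taking logarithms and absorbing further slack, so the precise constant is not load-bearing.
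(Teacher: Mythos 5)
Your proposal is correct, and its second, ``centered-grid'' construction is essentially the paper's own proof: the paper likewise encloses $\C$ in a cube of edge $\diam$, tiles it with sub-cubes of mesh $2\epsilon/\sqrt{p}$ (Euclidean diameter $2\epsilon$, hence each inside the $\epsilon$-ball about its center), and counts $(\diam\sqrt{p}/(2\epsilon))^p$ of them. The one detail the paper makes explicit that you only gesture at is the use of convexity: it takes the \emph{projections} of the sub-cube centers onto $\C$ as the net points, and since Euclidean projection onto a convex set is a contraction, each point of $\C$ stays within $\epsilon$ of the projected center; this is exactly what guarantees the net lies inside $\C$, as Lemma~\ref{lem::generalSampling} needs. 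Your first, volume-packing argument is a legitimate alternative route; as you note, it only recovers the constant $\sqrt{p}/2$ after invoking the sharp asymptotics of $\mathrm{vol}(B(0,1))^{1/p}$ and absorbing $\diam+\epsilon$ into $\diam$, so the grid construction is the cleaner way to land on the stated bound. Both you and the paper silently drop the ceiling in the cube count, which is harmless here.
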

\begin{proof}[Proof of Lemma \ref{lem::sphere}]
A similar proof appears in \cite{van1996weak}. The set $\C$ can be contained in a $p$-dimensional cube of size $\diam$. Consider a grid over this cube with mesh width $2\epsilon/\sqrt{p}$. Then $\C$ can be covered with at most $(\diam/(2\epsilon/\sqrt{p}))^p$ many cubes of edge length $2\epsilon/\sqrt{p}$. If ones takes the projection of the centers of such cubes onto $\C$ and considers the circumscribed balls of radius $\epsilon$, we may conclude that $\C$ can be covered with at most
$$\left(\frac{\diam}{2\epsilon/\sqrt{p}}\right)^p$$
many balls of radius $\epsilon$.
\end{proof}

\begin{lemma}[\cite{vershynin2010introduction}]\label{lem::epsNet}
Let $X$ be a symmetric $p\times p$ matrix, and let $T_\epsilon$ be an $\epsilon$-net over $S^{p-1}$. Then,
\eqn{
\| X\|_2 \leq\frac{1}{1-2\epsilon}\ \sup_{v\in T_\epsilon}\left|\<Xv,v\>\right|.
}
\end{lemma}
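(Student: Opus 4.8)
The plan is to reduce the operator norm to its variational characterisation and then pay a small, controlled price for discretising the sphere. First I would recall that since $X$ is symmetric, the spectral theorem gives $\|X\|_2 = \max_{v\in S^{p-1}}|\<Xv,v\>|$; this maximum is genuinely attained (by compactness of $S^{p-1}$ and continuity of $v\mapsto\<Xv,v\>$), so I may fix a maximiser $v^\ast\in S^{p-1}$. By definition of an $\epsilon$-net there is a point $u\in T_\epsilon\subset S^{p-1}$ with $\|v^\ast-u\|_2\le\epsilon$.

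The heart of the argument is the perturbation estimate
\[
\big|\<Xv^\ast,v^\ast\>-\<Xu,u\>\big|
=\big|\<X(v^\ast-u),v^\ast\>+\<Xu,v^\ast-u\>\big|
\le 2\|X\|_2\,\|v^\ast-u\|_2\le 2\epsilon\|X\|_2,
\]
where I split the difference by adding and subtracting $\<Xu,v^\ast\>$, then apply Cauchy--Schwarz together with $\|Xw\|_2\le\|X\|_2\|w\|_2$ and $\|v^\ast\|_2=\|u\|_2=1$. Consequently $\|X\|_2=|\<Xv^\ast,v^\ast\>|\le|\<Xu,u\>|+2\epsilon\|X\|_2\le\sup_{v\in T_\epsilon}|\<Xv,v\>|+2\epsilon\|X\|_2$, and rearranging (the statement is only meaningful when $2\epsilon<1$) gives the claimed bound $\|X\|_2\le(1-2\epsilon)^{-1}\sup_{v\in T_\epsilon}|\<Xv,v\>|$.

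I do not expect any genuine obstacle: this is a standard covering-net estimate. The only points that need a little care are justifying that the supremum over $S^{p-1}$ is attained — if one prefers to avoid compactness one can instead take $v^\ast$ with $|\<Xv^\ast,v^\ast\>|\ge\|X\|_2-\delta$ and let $\delta\downarrow0$ after the rearrangement — and making sure the net points are themselves unit vectors so that $\|v^\ast\|_2=\|u\|_2=1$ may be used in the two Cauchy--Schwarz bounds; if $T_\epsilon$ is only required to consist of points of norm at most $1$, the same chain of inequalities still goes through with the constant $2\epsilon$ unchanged.
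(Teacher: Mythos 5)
Your proof is correct, and it is the standard covering-net argument. The paper itself offers no proof of this lemma --- it is stated with a citation to the Vershynin lecture notes, where essentially the same argument appears --- so there is nothing to compare against beyond noting that your decomposition $\<Xv^\ast,v^\ast\>-\<Xu,u\>=\<X(v^\ast-u),v^\ast\>+\<Xu,v^\ast-u\>$, the two Cauchy--Schwarz bounds, and the final rearrangement are exactly the intended route. The only caveat, which you already flag, is that the rearrangement requires $\epsilon<1/2$; otherwise $1-2\epsilon\le 0$ and the stated bound is vacuous.
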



\begin{lemma}[\cite{gross2010note}] \label{lem::op-Bernstein}
Let $\mathcal{X}$ be a finite set of Hermitian matrices in $\reals^{p \times p} $ where $\forall X_i \in \mathcal{X}$, we have
\eqn{
\E[X_i] =& 0, \ \ \ \ \ \ 
\left\|X_i \right\|_2 \leq& \gamma, \ \ \ \ \ \ \ 
\left\| \E [ X_i^2 ] \right\|_2 \leq& \sigma^2.
}
Given its size, let $S$ denote a uniformly random sample from $\{1,2,...,|\mathcal{X}|\}$
with or without replacement. Then we have
\eqn{
\P \left( \bigg\| \frac{1}{|S|}\sum_{i\in S}X_i \bigg\|_2 > \e \right) \leq 
2 p \exp\left \{ -|S|\min\left( \frac{\e^2}{4\sigma^2}, \frac{\e}{2\gamma}\right)\right\}.
}
\end{lemma}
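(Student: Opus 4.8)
The plan is to prove this matrix Bernstein bound by the matrix Laplace-transform (Ahlswede--Winter) method, handling the with-replacement and without-replacement cases through a common exponential-moment estimate. Write $Y = \sum_{i\in S} X_i$, so the event of interest is $\{\|Y\|_2 > |S|\e\}$. Since $\|Y\|_2 = \max\big(\lmax(Y),\lmax(-Y)\big)$ and each $-X_i$ satisfies exactly the same hypotheses as $X_i$, a union bound reduces everything to the one-sided tail $\P(\lmax(Y) > |S|\e)$ at the cost of a factor $2$. First I would invoke the matrix Markov inequality: for any $\theta>0$,
$$\P\big(\lmax(Y) > |S|\e\big) = \P\big(e^{\theta\lmax(Y)} > e^{\theta|S|\e}\big) \leq e^{-\theta|S|\e}\,\E\,\trace\,e^{\theta Y},$$
using $e^{\theta\lmax(Y)} = \lmax(e^{\theta Y}) \leq \trace\,e^{\theta Y}$ for the positive semidefinite matrix $e^{\theta Y}$. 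The factor $p$ in the statement will appear because $\trace\,e^{0} = p$.

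For the with-replacement case the $X_i$ are i.i.d., and I would factorize the matrix moment generating function iteratively via the Golden--Thompson inequality $\trace\,e^{A+B}\leq\trace(e^A e^B)$. Peeling off one summand at a time and using $\trace(AB)\leq\lmax(B)\,\trace(A)$ for positive semidefinite $A,B$ together with independence gives
$$\E\,\trace\,e^{\theta Y} \leq p\,\big(\lmax(\E\,e^{\theta X_1})\big)^{|S|}.$$
The next step is a scalar-to-matrix moment bound: since $\E[X_1]=0$, $\|X_1\|_2\leq\gamma$ and $\|\E[X_1^2]\|_2\leq\sigma^2$, applying the elementary inequality $e^{\theta x}\leq 1+\theta x+\big(e^{\theta\gamma}-1-\theta\gamma\big)x^2/\gamma^2$ to the spectrum of $X_1$ (which lies in $[-\gamma,\gamma]$) and taking expectations yields the operator bound $\E\,e^{\theta X_1}\preceq I+g(\theta)\,\E[X_1^2]$ with $g(\theta)=(e^{\theta\gamma}-1-\theta\gamma)/\gamma^2$, hence $\lmax(\E\,e^{\theta X_1})\leq 1+g(\theta)\sigma^2\leq e^{g(\theta)\sigma^2}$. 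Substituting gives the tail bound $p\,\exp\{|S|(g(\theta)\sigma^2-\theta\e)\}$, and optimizing over $\theta>0$ produces the Bernstein split: the optimal $\theta$ falls in the variance-dominated regime when $\e\lesssim\sigma^2/\gamma$, giving the $\e^2/(4\sigma^2)$ exponent, and in the range-dominated regime otherwise, giving the $\e/(2\gamma)$ exponent. (The iterated Golden--Thompson step is precisely what produces the constant $4\sigma^2$ rather than the sharper $2\sigma^2$, matching the statement.)

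The hard part is the without-replacement case, where the summands are dependent and Golden--Thompson can no longer be iterated directly. Here I would invoke the domination principle of Gross--Nesme: because the map $A\mapsto\trace\,e^{A}$ is convex, Hoeffding's classical reduction theorem extends to the non-commutative setting and shows that $\E\,\trace\,e^{\theta Y}$ under sampling \emph{without} replacement is bounded above by the same quantity under sampling \emph{with} replacement. This is the genuinely matrix-specific obstacle and the crux of the argument; once it is established, the without-replacement tail inherits verbatim the bound derived above. Finally, combining the two one-sided estimates (the factor $2$) with the trace prefactor (the factor $p$) yields the stated $2p$ in front and completes the proof.
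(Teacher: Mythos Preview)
The paper does not actually prove this lemma; it is stated in the appendix as a cited result from \cite{gross2010note} with no accompanying proof. So there is no ``paper's own proof'' to compare against here.

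That said, your sketch is essentially the argument of the original reference: the matrix Laplace transform / Ahlswede--Winter route via Golden--Thompson for the i.i.d.\ case, the Bernstein-type scalar moment bound transferred to matrices, and---the key point---the convexity-based reduction showing that $\E\,\trace\,e^{\theta Y}$ under sampling without replacement is dominated by its with-replacement counterpart. Your identification of this last step as the crux is exactly right; it is the main novelty of Gross's note and what distinguishes this lemma from the standard matrix Bernstein for independent summands. One small quibble: your parenthetical attributing the $4\sigma^2$ (rather than $2\sigma^2$) specifically to the iterated Golden--Thompson step is not quite accurate---the constant here comes from the particular form of the scalar Bernstein moment bound and the way the optimization over $\theta$ is carried out, not from Golden--Thompson per se. But this does not affect the correctness of the argument.
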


\vspace{.2in}
\begin{lemma}\label{lem::darthvader}
Let $Z$ be a random variable with a density function $f$ and cumulative distribution function $F$. If $F^C=1-F$, then,
\eqn{
\left|\E [Z\ind_{\{ |Z|>t\}}]\right|\leq t \P(|Z|>t) + \int^\infty_t\P(|Z|>z)dz.
}
\end{lemma}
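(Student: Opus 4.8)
The plan is to reduce the statement to a standard tail-integral (``layer-cake'') identity for a truncated absolute first moment. First I would discard the sign by the triangle inequality for expectations, namely
$$
\left|\E\!\left[Z\,\ind_{\{|Z|>t\}}\right]\right| \;\le\; \E\!\left[|Z|\,\ind_{\{|Z|>t\}}\right],
$$
so it suffices to bound the right-hand side. On the event $\{|Z|>t\}$ I would split $|Z|=t+(|Z|-t)$ and represent the excess as a tail integral of an indicator, $|Z|-t=\int_t^{|Z|}ds=\int_t^\infty \ind_{\{|Z|>s\}}\,ds$. This gives
$$
\E\!\left[|Z|\,\ind_{\{|Z|>t\}}\right]
= t\,\P(|Z|>t) + \E\!\left[\ind_{\{|Z|>t\}}\int_t^\infty \ind_{\{|Z|>s\}}\,ds\right],
$$
and since $\ind_{\{|Z|>s\}}\le \ind_{\{|Z|>t\}}$ for every $s\ge t$, the product inside the second expectation simplifies to $\ind_{\{|Z|>s\}}$.

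Next I would apply Tonelli's theorem to interchange the expectation with the $ds$-integral — this is legitimate with no integrability hypothesis because the integrand is nonnegative — obtaining
$$
\E\!\left[\int_t^\infty \ind_{\{|Z|>s\}}\,ds\right]
= \int_t^\infty \P(|Z|>s)\,ds
= \int_t^\infty \P(|Z|>z)\,dz .
$$
Combining the three displays yields $\left|\E[Z\,\ind_{\{|Z|>t\}}]\right|\le t\,\P(|Z|>t)+\int_t^\infty\P(|Z|>z)\,dz$, which is exactly the claim. Note that the density/CDF hypothesis plays no essential role; the argument works verbatim for an arbitrary real random variable (and is trivially true when the right-hand side is infinite).

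There is no real obstacle here: the only two points meriting a line of justification are the triangle inequality for expectations and the Tonelli interchange, both immediate from nonnegativity. If one preferred to avoid the layer-cake device, an equivalent route is integration by parts against the distribution of $|Z|$: writing $G(z)=\P(|Z|>z)$, one has $\E[|Z|\,\ind_{\{|Z|>t\}}]=-\int_t^\infty z\,dG(z)=t\,G(t)+\int_t^\infty G(z)\,dz$, with the boundary term at infinity vanishing whenever $\E|Z|<\infty$ (and the inequality being vacuous otherwise). I would present the layer-cake version since it sidesteps that integrability caveat.
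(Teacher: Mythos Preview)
Your proof is correct and takes a different route from the paper. The paper splits the truncated expectation directly as
\[
\E[Z\ind_{\{|Z|>t\}}]=\int_t^\infty z f(z)\,dz+\int_{-\infty}^{-t} z f(z)\,dz,
\]
applies integration by parts to each piece using the density $f$, then recombines via the triangle inequality to recognize $F^C(t)+F(-t)=\P(|Z|>t)$ and $F^C(z)+F(-z)=\P(|Z|>z)$. That computation relies on the existence of a density and silently uses $\lim_{z\to\infty}zF^C(z)=\lim_{z\to-\infty}zF(z)=0$, which in turn needs an integrability assumption the lemma does not state.

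Your argument first passes to $\E[|Z|\ind_{\{|Z|>t\}}]$ and then evaluates that quantity exactly via the layer-cake identity and Tonelli. This is cleaner: it dispenses with the density hypothesis entirely, it never produces a boundary term to justify, and the Tonelli step is unconditional because the integrand is nonnegative. What the paper's approach buys is a slightly sharper intermediate picture---it keeps the signed contributions from the two tails separate before bounding---but that refinement is not used, so your more robust route is preferable here.
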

\begin{proof}
We write,
\eqn{
\E [Z\ind_{\{ |Z|>t\}}] = \int_t^\infty zf(z)dz + \int^{-t}_{-\infty}zf(z)dz.
}
Using integration by parts, we obtain
\eqn{
\int zf(z)dz =& -zF^C (z) + \int F^C(z)dz,\\
=& zF (z) - \int F(z)dz.
}
Since $\lim_{z\to\infty}zF^C(z)=\lim_{z\to-\infty}zF(z)=0$, we have
\eqn{
 \int_t^\infty zf(z)dz =& t F^C(t) + \int^\infty_tF^C(z)dz,\\
  \int_{-\infty}^{-t} zf(z)dz =& -t F(-t) - \int_{-\infty}^{-t}F(z)dz,\\
  =& -t F(-t) - \int_{t}^{\infty}F(-z)dz.
}
Hence, we obtain the following bound,
\eqn{
\left|\E [Z\ind_{\{ |Z|>t\}}]\right| =& \left|t F^C(t) + \int^\infty_tF^C(z)dz -t F(-t) - \int_{t}^{\infty}F(-z)dz\right|,\\
\leq& t\left( F^C(t) +  F(-t)\right) + \left(\int^\infty_tF^C(z)+F(-z)dz\right),\\
\leq& t \P(|Z|>t) + \int^\infty_t\P(|Z|>z)dz.
}
\end{proof}

\begin{lemma}\label{lem::epsilon}
For $a,b>0$, and $\epsilon$ satisfying 
\eqn{
\e = \left\{\frac{a}{2}\log\left(\frac{2b^2}{a}\right)\right\}^{1/2}\ \ \ \ \text{and }\ \ \ \ \frac{2}{a}b^2 > e,
}
we have 
$\e^2 \geq a\log(b/\e)$.
\end{lemma}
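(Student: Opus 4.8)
The plan is to substitute the prescribed value of $\e$ into both sides of the target inequality $\e^2 \geq a \log(b/\e)$ and reduce everything to an elementary inequality in a single positive parameter. Writing $u = \frac{2b^2}{a}$, the hypothesis $\frac{2}{a}b^2 > e$ becomes simply $u > e$, and the defining relation is $\e^2 = \frac{a}{2}\log u$, so the left-hand side is already expressed cleanly. The remaining work is to bound the right-hand side $a \log(b/\e)$ from above by $\e^2 = \frac{a}{2}\log u$, i.e. after dividing by $a$ it suffices to prove
\eqn{
\log\!\left(\frac{b}{\e}\right) \leq \frac{1}{2}\log u = \log \sqrt{u}.
}
Exponentiating, this is equivalent to $b/\e \leq \sqrt{u}$, i.e. $b^2 \leq u\, \e^2 = u \cdot \frac{a}{2}\log u = b^2 \log u$, which holds precisely when $\log u \geq 1$, that is, $u \geq e$. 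This is exactly the hypothesis $\frac{2}{a}b^2 > e$.

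So the argument is short: first set $u = 2b^2/a$ and record that $\e^2 = \tfrac a2 \log u$ and that $u>e$; then observe the chain of equivalences
\eqn{
\e^2 \geq a\log(b/\e)
\iff \tfrac12 \log u \geq \log(b/\e)
\iff u \geq (b/\e)^2
\iff u\,\e^2 \geq b^2
\iff b^2 \log u \geq b^2,
}
and the last statement follows from $\log u > 1$. One should double-check that $\e$ is well-defined and positive, which it is because $u > e > 1$ forces $\log u > 0$, and that all the manipulations (dividing by $a>0$, by $\e^2 > 0$, by $b^2 > 0$, exponentiating) are monotone, so the equivalences are genuine.

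I do not anticipate any real obstacle here; the lemma is essentially a bookkeeping fact used to optimize the net width $\Delta$ in the proof of Lemma \ref{lem::generalSampling}. The only minor care needed is to be explicit that the hypothesis $\frac2a b^2 > e$ is exactly what makes $\log u \geq 1$, and hence is not merely sufficient but is the natural threshold; and to note that the inequality in the conclusion is non-strict even though the hypothesis is strict, which is fine since strict implies non-strict.
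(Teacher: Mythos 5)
Your proof is correct, and every step checks out: with $u=2b^2/a$ you have $\e^2=\tfrac a2\log u>0$, the target inequality rewrites as $\tfrac a2\log u\ge \tfrac a2\log(b^2/\e^2)$, hence as $u\e^2\ge b^2$, and since $u\e^2=b^2\log u$ this is exactly $\log u\ge 1$, i.e.\ $u\ge e$. Your route is, however, genuinely more direct than the paper's. The paper first shows that the desired inequality is equivalent to $\tfrac{2\e^2}{a}e^{2\e^2/a}\ge \tfrac{2b^2}{a}$, introduces $f(w)=we^w$ and its inverse (essentially the Lambert $W$ function), characterizes the admissible $\e$ as those with $\e^2\ge \tfrac a2 f^{-1}(2b^2/a)$, and then uses the identity $\log(f^{-1}(y))+f^{-1}(y)=\log y$ together with the hypothesis (which forces $f^{-1}(2b^2/a)\ge 1$) to conclude $f^{-1}(y)\le\log y$, so that the stated $\e$ is admissible. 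What the paper's detour buys is an explicit description of the \emph{smallest} admissible $\e$, namely $\bigl\{\tfrac a2 f^{-1}(2b^2/a)\bigr\}^{1/2}$, with the stated $\e$ exhibited as a more interpretable upper bound for it; what your argument buys is brevity and the observation that, for this particular closed-form choice of $\e$, the hypothesis $2b^2/a>e$ is exactly the threshold at which the inequality holds, not merely a sufficient condition. Both use the hypothesis in the same essential place ($\log u\ge 1$ versus $f^{-1}(u)\ge 1$, which are the same condition $u\ge e$).
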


\begin{proof}
Since $a,b>0$ and $x\to e^x$ is a monotone increasing function,
the above inequality condition is equivalent to
\eqn{
\frac{2\e^2}{a} e^{\frac{2\e^2}{a}} \geq \frac{2b^2}{a}.
}
Now, we define the function $f(w)= we^w$ for $w>0$. $f$ is continuous and invertible on $[0,\infty)$. 
Note that $f^{-1}$ is also a continuous and increasing function for $w>0$. Therefore, we have
\eqn{
\e^2 \geq \frac{a}{2}f^{-1}\left(\frac{2b^2}{a}\right)
}
Observe that the smallest possible value for $\e$ would be simply the square root of ${a}f^{-1}\left({2b^2}/{a}\right)/{2}$. For simplicity, we will obtain a more interpretable expression for $\e$. By the definition of $f^{-1}$, we have
\eqn{
\log(f^{-1}(y)) + f^{-1}(y) = \log (y).
}
Since the condition on $a$ and $b$ enforces $f^{-1}(y)$ to be larger than 1, we obtain the simple inequality that 
\eqn{
f^{-1}(y)  \leq \log (y).
}
Using the above inequality, if $\e$ satisfies 
\eqn{
\e^2 = \frac{a}{2}\log\left(\frac{2b^2}{a}\right)\geq \frac{a}{2}g\left(\frac{2b^2}{a}\right),
}
we obtain the desired inequality.
\end{proof}

\end{document}